\documentclass[10pt,dvipsnames]{article}
\usepackage[preprint]{tmlr}


\usepackage{amsmath,amsfonts,bm}









\def\eqref#1{equation~\ref{#1}}









\def\1{\bm{1}}










\DeclareMathAlphabet{\mathsfit}{\encodingdefault}{\sfdefault}{m}{sl}
\SetMathAlphabet{\mathsfit}{bold}{\encodingdefault}{\sfdefault}{bx}{n}













\usepackage{hyperref}
\usepackage{url}
\usepackage{soul}
\usepackage[utf8]{inputenc}
\usepackage{caption}
\usepackage{graphicx}
\usepackage{amsmath}
\usepackage{amsthm}
\usepackage{booktabs}
\usepackage{algorithm}
\usepackage{algorithmic}
\usepackage[switch]{lineno}
\usepackage{subcaption}
\usepackage{multirow}
\usepackage{pifont}
\usepackage{color}
\usepackage{enumitem}
\usepackage{fixltx2e}
\usepackage{amsthm}   
\usepackage{amsmath}  
\usepackage{amssymb}  
\usepackage{listings}
\newtheorem{theorem}{Theorem}[section]
\newtheorem{assumption}[theorem]{Assumption}

\title{Mixture of Balanced Information Bottlenecks for Long-Tailed Visual Recognition}


\author{\name Yifan Lan \email yifanlan@hust.edu.cn \\
    \addr Huazhong University of Science and Technology
    \AND
    \name Xin Cai \email xincai1998@gmail.com \\
    \addr Huazhong University of Science and Technology
    \AND
    \name Jun Cheng \email jcheng24@hust.edu.cn \\
    \addr Huazhong University of Science and Technology
    \AND
    \name Shan Tan \email shantan@hust.edu.cn \\
    \addr Huazhong University of Science and Technology
    }



\begin{document}

\maketitle

\begin{abstract}
     Deep neural networks (DNNs) have achieved significant success in various applications with large-scale and balanced data. However, data in real-world visual recognition are usually long-tailed, bringing challenges to efficient training and deployment of DNNs. Information bottleneck (IB) is an elegant approach for representation learning. In this paper, we propose a \textit{balanced} \textit{information bottleneck} (BIB) approach, in which loss function re-balancing and self-distillation techniques are integrated into the original IB network. BIB is thus capable of learning a sufficient representation with essential label-related information fully preserved for long-tailed visual recognition. To further enhance the representation learning capability, we also propose a novel structure of \textit{mixture} of multiple \textit{balanced} \textit{information bottlenecks} (MBIB), where different BIBs are responsible for combining knowledge from different network layers. MBIB facilitates an end-to-end learning strategy that trains representation and classification simultaneously from an information theory perspective. We conduct experiments on commonly used long-tailed datasets, including CIFAR100-LT, ImageNet-LT, and iNaturalist 2018. Both BIB and MBIB reach state-of-the-art performance for long-tailed visual recognition.
\end{abstract}


%

\section{Introduction}

With the emergence of ImageNet, COCO, and other datasets, deep neural networks (DNNs) have achieved great success in various computer vision tasks such as image classification, object detection, and image segmentation. The success of deep learning is largely due to the large-scale and balanced data in these tasks. However, real-world data are usually long-tailed \citep{7}, which means that a few classes (head classes) occupy most of the instances in the data, while most classes (tail classes) occupy only a few instances. When using standard training methods (using cross entropy as the loss function and instance balanced sampling) to train a model on a long-tailed training set, the model usually performs well for the head classes but poorly for the tail classes, which brings a great challenge to the training and deployment of DNNs \citep{7}.

Recently, many methods have been proposed to solve the problem of imbalanced data distribution. Among these methods, re-sampling and loss function re-balancing are the two most commonly used techniques. Re-sampling makes an imbalanced dataset balanced by over-sampling or under-sampling \citep{7,11}. Re-balancing methods include re-weighting \citep{7} and logits adjustment \citep{16}. \cite{17} found that although the classification performance of a model trained by standard methods is worse than that of re-balancing methods, the obtained feature space is better. They proposed to decouple the training process into two stages: learning representation and learning classifier. The decoupling training method leads to good results \citep{100,98,33}. \cite{36} showed that partial network parameters obtained in the learning representation stage, such as parameters of Batch Normalization (BN), are not suitable for the second stage of learning classifier directly, which suggests that learning representation and learning classifier should not be regarded as two completely independent stages. Mixture-of-Experts (MoE) techniques involve the training of multiple neural networks, each specializing in handling distinct segments of a long-tailed dataset. BBN \citep{90}, RIDE \citep{91} and SADE \citep{92} serve as representations of MoE methods. However, the enhanced capabilities of MoE come at the cost of increased computational loads. Recently, \cite{102} highlighted that, for long-tailed visual recognition, the key is not just the classification rule but the ability to learn and identify correct features. 

The Information Bottleneck (IB) is an elegant approach for representation learning. It originates from the rate-distortion theory \citep{19,20}, and has made extraordinary progress in many tasks, such as image classification \citep{21}, image segmentation \citep{38}, multi-view learning \citep{40}, reinforcement learning \citep{54} and so on. IB rethinks what a “good” representation is: for a given task, the best representation should contain sufficient and minimal amount of information. In the IB theory, sufficiency is achieved by maximizing mutual information between the representation $z$ and the label $y$, and minimality is achieved by minimizing mutual information between the input $x$ and the representation $z$. By introducing a Lagrange multiplier $\beta$, the IB method can be optimized by minimizing
\begin{align}\label{eq:IB}
{{R}_{IB}} &= -I(z;y) + \beta I(x;z), \quad 
I(a; b) = D_{KL}(P_{(a,b)} \| P_a \otimes P_b),
\end{align}
where $I(a;b)$ represents mutual information between $a$ and $b$, which measures the mutual dependence between $a$ and $b$. \(D_{KL}\) is the Kullback–Leibler divergence, and \(P_a \otimes P_b\) is the outer product distribution which assigns probability \(P_a(a) \cdot P_b(b)\) to each \((a, b)\).

IB provides a new learning paradigm that naturally avoids over-fitting and enhances model robustness. To optimize the IB objective in problems involving high-dimensional variables, variants such as VIB \citep{21}, Drop-IB \citep{56}, and Nonlinear-IB \citep{57} are proposed. VIB is the first to indirectly optimize the variational upper bound of IB, and has become one of the most widely used IB variants for its simplicity and effectiveness.

Although IB has made a lot of progress on balanced datasets, $I(z;y)$ is still affected by label distribution when applied to long-tailed datasets. This limitation arises from the scarcity of tail class samples, making it difficult for DNNs to learn a sufficient representation $z$. Besides, mutual information is difficult to estimate in DNNs. To solve these problems, we propose a novel \textit{balanced} information bottleneck (BIB) method in this study. We use the loss function re-balancing technique to alleviate challenges posed by label distribution. Concurrently, we implicitly optimize the information bottleneck objective by self-distillation to reserve as much information related to labels as possible in the process of information flow. To further enhance the representation learning ability, we introduce a novel framework of a mixture of multiple \textit{balanced} information bottlenecks (MBIB). MBIB is the first network to leverage \textit{multiple} balanced information bottlenecks, each responsible for extracting knowledge from different network layers. By optimizing multiple IB objectives simultaneously, MBIB ensures a comprehensive and effective representation learning process, leading to improved performance. We conduct experiments on benchmark datasets, including CIFAR100-LT, ImageNet-LT, and iNaturalist 2018. 

The contributions of this study are summarized as follows:
\begin{enumerate}[leftmargin=30 pt, nosep]
    \item[1)] Firstly, we propose a novel \textit{balanced} information bottleneck (BIB) method to handle long-tailed data for real-word visual recognition. 
    \item[2)] Secondly, to the best of our knowledge, we propose the first network with a mixture of \textit{multiple} balanced information bottleneck (MBIB), optimizing diverse IB objectives simultaneously to effectively learn representation and classifier in an end-to-end fashion.
    \item[3)] Finally, our methods achieve state-of-the-art performance among single-expert methods and competitive performance with MoE methods while maintaining higher efficiency, as evidenced by the average classification accuracy across multiple benchmark datasets.
\end{enumerate}

\section{Related Work}
\subsection{Long-Tailed Visual Recognition}
Re-sampling and loss function re-balancing are the most widely studied approaches in imbalanced classification. Re-sampling aims to construct a balanced training set, including over-sampling and under-sampling. Due to the low diversity of tail classes, over-sampling often results in models over-fitting tail classes. Under-sampling discards part of samples in head classes, which damages the diversity of head classes and decreases model generalization performance. \cite{17} proposed a progressive sampling algorithm combining over- and under-sampling to transition the distribution from an imbalanced distribution to a balanced distribution. However, the problem of re-sampling still exists.

Loss function re-balancing gives different weights to different classes or instances to re-balance the model at the level of the loss function. This strategy mainly includes re-weighting \citep{14,22} and logits adjustment \citep{16,28}. \cite{14} proposed re-weighting the loss function with the number of effective samples. \cite{27} suggested that the model should pay more attention to hard samples by giving hard samples a large weight. On the other hand, \cite{16} and \cite{28} adjusted logits to make gradients more balanced during training. \cite{17} found that although re-balancing and re-sampling can improve the performance of imbalanced classification, they result in worse representation. \cite{17} proposed to decouple the training process into two stages. The first stage uses standard training methods to learn a good representation. In the second stage, strategies such as re-sampling, re-weighting and so on are used to fine-tune the classifier to learn a good classifier. The decoupling training approach provides a new training paradigm for long-tailed recognition. Many studies have focused on how to obtain better representation \citep{30,31,32} or learn a better classifier \citep{34,35,36}. MoE methods also have achieved a great success by effectively combining the knowledge from multiple experts. BBN \citep{91} introduces a two-branches network to address long-tailed recognition. RIDE \citep{90} trains multiple experts with the softmax loss respectively and enforces a KL-divergence based loss to enhance the diversity among various experts. SADE \citep{92} pioneers a novel spectrum-spanned multi-expert framework and introduces an innovative expert training scheme. Distillation strategy like DiVE \citep{93} and contrastive learnig like PaCo~\citep{94} have also achieved successes.

\subsection{Information Bottleneck}
IB divides the model into two parts: an encoder and a decoder. The encoder codes a random variable $x$ into a random variable $z$, and the decoder decodes the random variable $z$ into a random variable $y$. IB assumes that variables $x$, $z$, $y$ follow a Markov chain $y\leftrightarrow x\leftrightarrow z$, and $z$ is the bottleneck of the information flow. IB expects $z$ to retain information from $x$ to $y$ as much as possible while forgetting information unrelated to $y$. This means that for a given task, $z$ is the best representation in the perspective of rate-distortion theory. However, despite the beauty of the IB theory, the calculation of mutual information in IB is complicated, especially in deep learning \citep{37}. To address this problem, the Variational Information Bottleneck (VIB) \citep{21} optimizes the IB objective by using its upper bound. According to VIB, $-I(z;y)$ and $I(x;z)$ are bounded as 
\begin{align}
-I(z;y) & \le {\mathbb{E}_{p(z,y)}}-\log q(y|z) \notag 
 ={{L}_{CE}}(z,y), \label{eq14}\\
I(x;z) & \le {\mathbb{E}_{p(x,z)}}\log p(z|x)-{\mathbb{E}_{p(z)}}\log r(z) \notag\\ 
 & ={\mathbb{E}_{{{p}_{data}}(x)}}[KL(q(z|x)||r(z))]. 
\end{align}
Here, \(\mathbb{E}_{p(z,y)}\) and \(\mathbb{E}_{p(x,z)}\) is the expectation over the joint distribution \(p(z,y)\) and \(p(x,z)\). Similarly, \(\mathbb{E}_{p_\text{data}(x)}\) is the expectation over the data distribution \(p_\text{data}(x)\). $q(y|z)$, $q(z|x)$, $r(z)$ are variational approximations to $p(y|z)$, $p(z|x)$, $p(z)$, respectively. These variational distributions are introduced to make the optimization computationally simpler by approximating the intractable true distributions. VIB is widely used for its simplicity and effectiveness. However, IB is essentially a compromise between representation and classification, and obtaining an optimal compromise point is difficult. To avoid this compromise, \cite{39} proposed a method to implicitly optimize the information bottleneck objective through self-distillation.

\section{Method}
In this section, we first formalize the problem setup and the motivation for our approach from the perspective of information theory in Section \ref{section_preliminaries}. We then introduce the proposed Balanced Information Bottleneck (BIB) framework in Section \ref{section_BIB}, followed by its extension, the Mixture of Balanced Information Bottlenecks (MBIB) in Section \ref{section_MBIB}, which leverages optimization of multiple IB and improves the representation further. We also explain the effectiveness of MBIB from the perspectives from information theory and self-distillation.

\subsection{Preliminaries}
\label{section_preliminaries}
Let $D=\{({{x}_{i}},{{y}_{i}})\}_{i=1}^{N}$ be a training set, where $y_i$ is the label for data $x_i$ and $K$ is the number of classes. Without loss of generality, let ${{n}_{1}}>{{n}_{2}}>\cdot \cdot \cdot >{{n}_{K}}$, where $n_i$ is the number of training samples for class $i$, hence the total number of training samples is  $N=\sum_{k=1}^{K}{{{n}_{k}}}$. Unlike the long-tailed distribution of the training set, the test set follows a uniform distribution, ensuring an equal number of samples across all classes for a balanced evaluation of the model's performance in each class.

The objective is to learn a sufficient and minimal  representation \( z \) that preserves information relevant to the label \( y \), while discarding irrelevant and redundant information in \( x \). From the Information Bottleneck (IB) perspective, this corresponds to optimizing the objective $R_{IB}$ in Eq. \ref{eq:IB}.

\subsection{Balanced Information Bottleneck (BIB)} \label{section_BIB}
The information bottleneck theory aims to obtain a sufficient and minimal representation of the input, in the sense of effectively characterizing the output. Let $v$ be an observation of the input $x$ extracted from an encoder such as a CNN, and $z$ be a representation encoded from $v$ by a fully connected layer, as shown in Figure \ref{fig1}. Considering CNNs' powerful feature extraction capability, we introduce the following assumption to enable tractable optimization of the IB objective \citep{39}:

\begin{assumption}[Sufficiency of Observation $v$]
\label{assumption_sufficiency}
The observation \( v \) extracted by the encoder is assumed to retain all label-relevant information from the input \( x \), i.e., \( I(v; y) = I(x; y) \).
\end{assumption}

Therefore, our problem is translated into how to find a representation $z$ that preserves the sufficient and minimal information of the observation $v$ to the label $y$. 

\begin{theorem}[Three Sub-optimization Objectives]
\label{thm:ib_decomposition}
Under Assumption~\ref{assumption_sufficiency}, the Information Bottleneck (IB) objective can be decomposed into the following three sub-objectives:
\begin{itemize}
    \item Maximize $I(v; y)$ — encourage $v$ to retain label-relevant information;
    \item Maximize $I(z; y)$ — encourage $z$ to be predictive of the label;
    \item Minimizing $|I(v;y)-I(z;y)|$ — encourage $z$ to contain less redundant information.
\end{itemize}
We need to optimize these three objectives together to get a minimal and sufficient representation $z$.
\end{theorem}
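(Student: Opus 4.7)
My plan is to work from the Markov chain $y \leftrightarrow x \leftrightarrow v \leftrightarrow z$ that the encoder--bottleneck pipeline induces, and to use standard information-theoretic identities together with Assumption~\ref{assumption_sufficiency} to reduce the IB Lagrangian $R_{IB} = -I(z;y) + \beta I(x;z)$ to the three stated criteria.

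First I would apply the Data Processing Inequality along the chain to obtain $I(z;y) \le I(v;y) \le I(x;y)$. Assumption~\ref{assumption_sufficiency} collapses the second inequality to equality, which immediately shows that the absolute value in sub-objective~3 is superfluous: $|I(v;y) - I(z;y)| = I(v;y) - I(z;y) = I(v;y \mid z) \ge 0$. Hence sub-objective~3 is exactly the sufficiency gap of $z$ relative to $v$. Sub-objective~2 (maximize $I(z;y)$) is the direct term $-I(z;y)$ appearing in $R_{IB}$. Sub-objective~1 (maximize $I(v;y)$) is the operational enforcement of Assumption~\ref{assumption_sufficiency} itself, since the DPI caps $I(v;y)$ at $I(x;y)$ and the assumption is attained only when this cap is saturated.

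Next I would argue that sub-objectives~2 and~3 jointly characterize sufficiency of $z$: $I(z;y) = I(v;y)$ holds if and only if both the ceiling $I(v;y)$ is approached from below and the gap is driven to zero. Combined with sub-objective~1, this yields $I(z;y) = I(v;y) = I(x;y)$, i.e., $z$ is a sufficient statistic for $y$, which is precisely the optimum of the $-I(z;y)$ term of $R_{IB}$. For the minimality term $\beta I(x;z)$, I would rewrite it via the chain-rule identity $I(x;z) = I(z;y) + I(x;z \mid y)$, so that $R_{IB} = (\beta - 1)\, I(z;y) + \beta\, I(x;z \mid y)$; in the standard regime $\beta \in (0,1]$ the first term is covered by sub-objective~2 and the residual $I(x;z \mid y)$ is controlled by the compressive $v \mapsto z$ bottleneck and the self-distillation mechanism introduced in Section~\ref{section_BIB}, mirroring the implicit-IB argument of~\cite{39}.

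The main obstacle is this last step: $\beta\, I(x;z \mid y)$ is not literally equal to, nor an algebraic function of, $|I(v;y) - I(z;y)|$, so the decomposition is not a strict rewriting of $R_{IB}$ but an operational one. I would therefore phrase the theorem as follows: under Assumption~\ref{assumption_sufficiency}, any $z$ that simultaneously fulfills sub-objectives~1--3 is a sufficient representation of $x$ for $y$, and jointly optimizing them drives the sufficiency half of $R_{IB}$ to its optimum, with the minimality half left to the architectural and distillation choices of the BIB/MBIB construction. This framing matches the paper's stated motivation of sidestepping the explicit representation-versus-classification compromise that plain VIB imposes.
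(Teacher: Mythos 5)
Your derivation of sub-objectives 1 and 2 via the data processing inequality and Assumption~\ref{assumption_sufficiency} is sound and lands where the paper does, but your treatment of the third sub-objective diverges from the paper's proof in a way that leaves the theorem's actual claim unproven. The paper's argument rests on the chain-rule decomposition of the \emph{compression} term,
\begin{equation*}
I(v;z) = I(z;y) + I(v;z\mid y),
\end{equation*}
valid under the Markov chain $y \leftrightarrow v \leftrightarrow z$, where $I(v;z\mid y)$ is the label-irrelevant (redundant) information carried by $z$; it then invokes the result of \citet{39} that minimizing $I(v;z\mid y)$ is equivalent to minimizing $|I(v;y)-I(z;y)|$. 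That cited equivalence is exactly what licenses the theorem's third bullet --- ``encourage $z$ to contain less redundant information'' --- and it is the step your proposal omits. You instead identify $|I(v;y)-I(z;y)|$ with the \emph{sufficiency} gap $I(v;y\mid z)$ (a correct identity under the same Markov chain, but a different conditional mutual information), and then concede that the minimality term $\beta I(x;z)$ is not an algebraic function of this quantity, proposing to weaken the theorem so that minimality is delegated to architecture and distillation. That is a restatement of the theorem rather than a proof of it: as written, the theorem asserts that the three sub-objectives jointly yield a \emph{minimal and sufficient} $z$, and your version delivers only sufficiency.

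To close the gap you would need the bridge the paper takes: with $v$ playing the role of the input (as Assumption~\ref{assumption_sufficiency} permits), optimizing the IB objective amounts to maximizing $I(z;y)$ while minimizing $I(v;z\mid y)$, and the latter is achieved by driving $I(z;y)$ toward $I(v;y)$, i.e.\ by minimizing $|I(v;y)-I(z;y)|$. Your identity $R_{IB}=(\beta-1)\,I(z;y)+\beta\, I(x;z\mid y)$ is correct and points in the right direction, but you apply it to $x$ rather than $v$ and never connect $I(x;z\mid y)$ (or $I(v;z\mid y)$) to the quantity $|I(v;y)-I(z;y)|$ that the theorem actually names. (To be fair, the paper itself discharges this step by citation rather than by argument, so it is the weakest link in both versions --- but the paper at least asserts the link, whereas your proof explicitly declines to.)
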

\begin{proof}[Proof of Theorem \ref{thm:ib_decomposition}]
We decompose $I(v;z)$ to two terms:
\begin{equation}\label{eq16}
I(v;z)=I(z;y)+I(v;z|y),
\end{equation}
where the first term represents the information in $z$ that is related to the label, and the second term represents information that is not related to the label. To optimize the IB objective, we should maximize $I(z;y)$ while minimizing $I(v;z|y)$. Minimizing $I(v;z|y)$ is equivalent to minimizing $|I(v;y)-I(z;y)|$ \citep{39}. Therefore, we can decompose the objective into three sub-optimization objectives: maximizing $I(v;y)$, maximizing $I(z;y)$ and forcing $I(z;y)$ to approximate $I(v;y)$. 
\end{proof}

\begin{figure}
\centering
\includegraphics[height=2.7cm]{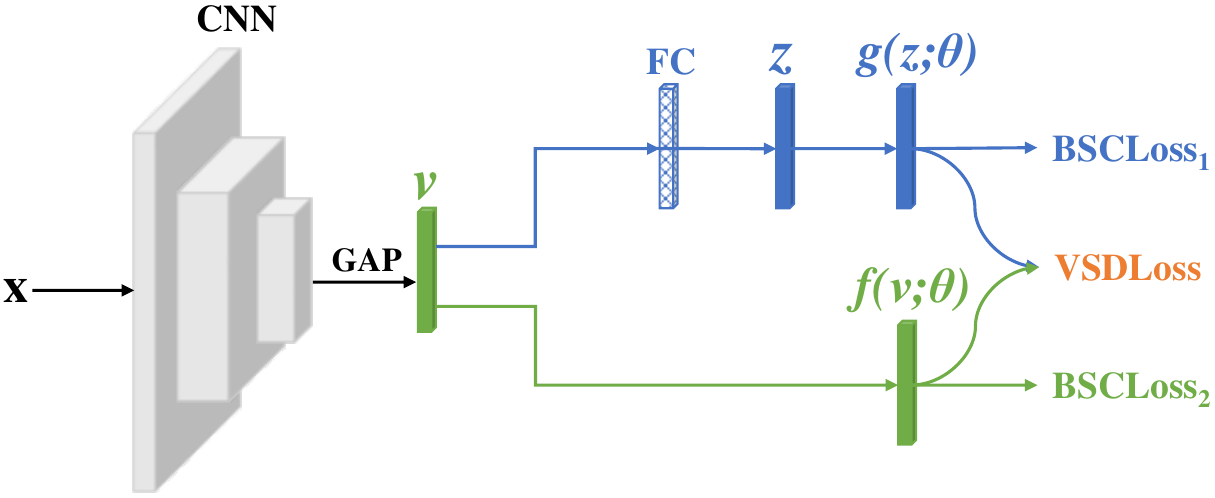}
\caption{
The network structure of BIB. FC means the Fully Connected layer and GAP means Global Average Pooling, ${{f(v;\theta )}}$ and ${{g(z;\theta )}}$ are classifiers. BSCLoss means the balanced softmax cross entropy loss ($Loss_{1}$ and $Loss_{2}$), and VSDloss means  variational self-distillation loss ($Loss_{3}$). The output is the mean of the outputs from ${{f(v;\theta )}}$ and ${{g(z;\theta )}}$.
}
\label{fig1}
\end{figure}

To handle long-tailed data and optimize the IB objective, we propose a \textit{balanced information bottleneck} (BIB), which integrates re-balancing techniques and self-distillation into the original network architecture. This approach mitigates data imbalance while jointly optimizing the three sub-objectives defined in Theorem~\ref{thm:ib_decomposition} to improve the quality of representation $z$. 
We now formulate a loss function to address each of the three objectives from Theorem~\ref{thm:ib_decomposition}.

We start with maximizing $I(v;y)$ and $I(z;y)$. According to VIB \citep{21}, $-I(v;y)$ and $-I(z;y)$ are bounded as ${\mathbb{E}_{p(v,y)}}-\log q(y|v)$ and ${\mathbb{E}_{p(z,y)}}-\log q(y|z)$. For a long-tailed dataset, we apply a re-balancing technique over $I(v;y)$:

\begin{equation}\label{eq:logits adjustment}
{{p}_{s}}({{y}_{i}}|v) \approx {{q}_{s}}({{y}_{i}}|v)=\frac{{{n}_{i}}{{e}^{{f_{i}(v;\theta )}}}}{\sum\limits_{j=1}^{K}{{{n}_{j}}{{e}^{{f_{j}(v;\theta )}}}}},
\end{equation}

where the subscript $s$ represents the training set distribution, ${{f(v;\theta )}}$ represents the classifier following $v$. The proof can be found in Appendix \ref{proof}. The loss corresponding to maximizing $I(v;y)$ becomes:
\begin{equation}\label{eq23}
Loss_{1}={\mathbb{E}_{p(v,y)}}-\log q_{s}(y|v).
\end{equation}
Similarly, to maximize $I(z;y)$, we get the loss as:
\begin{equation}\label{eq24}
Loss_{2}={\mathbb{E}_{p(z,y)}}-\log q_{s}(y|z).
\end{equation}
$Loss_1$ and $Loss_2$ are cross entropy losses. We introduce a classes weighting factor inversely proportional to the label frequency to strengthen the learning of the minority class and re-balance the losses better \citep{104}. The weighting factor is:
\begin{equation}\label{eq:class re-balancing}
w_i = \frac{K \cdot (1/d_i)^m}{\sum_{i=1}^{K} (1/d_i)^m},
\end{equation}
where $d_i$ is the $i$-th class frequency of the training dataset, and $m$ is a hyperparameter. Therefore, $Loss_1$ and $Loss_2$ are regarded as balanced softmax cross entropy losses (BSCLoss).

Then, to force $I(z;y)$ to approximate $I(v;y)$, we only need to ensure that $H(y|v)$ approximates $H(y|z)$. \cite{39} proved that making $q(y|v)$ approximate $q(y|z)$, i.e., minimizing the KL-divergence between $q(y|z)$ and $q(y|v)$, can effectively make $H(y|v)$ approximate $H(y|z)$. Therefore, our third loss is to minimize the $D_{KL}[q(y|v)||q(y|z)]$:
\begin{align}
  Loss_{3} & =\mathbb{E}_{q(v|x)}[D_{KL}[q(y|v)||q(y|z)]] \notag \\
  & =\mathbb{E}_{q(v|x)}[\mathbb{E}_{q(z|v)}[q(y|v)[\log q(y|v)-\log q(y|z)]]] \notag \\ 
  & =\mathbb{E}_{q(v|x)}[\mathbb{E}_{q(z|v)}[-H(y|v)-q(y|v)\log q(y|z)]]. \label{eq8}
\end{align}
Note that this is like the method of self-distillation. To stabilize the optimization, we don't optimize $q(y|v)$; instead, we detach it from the backward propagation process. We call $Loss_{3}$ the variational self-distillation loss (VSDLoss). Furthermore, to mitigate the effect of the long-tailed distribution, we use class-dependent self-distillation temperatures: $q({{y}_{i}}|v)=\frac{{{e}^{{f_{i}(v;\theta )/T_{i}}}}}{\sum\limits_{j=1}^{K}{{{e}^{{f_{j}(v;\theta )/T_{j}}}}}}$ and $q({{y}_{i}}|z)=\frac{{{e}^{{g_{i}(z;\theta )/T_{i}}}}}{\sum\limits_{j=1}^{K}{{{e}^{{g_{j}z;\theta )/T_{j}}}}}}$
, where $T_{i}=(\frac{n_{max}}{n_{i}})^{\gamma}$, $\gamma$ is a hyperparameter, ${{g(v;\theta )}}$ represents the classifier following $v$.

The overall loss is given by
\begin{equation}\label{eq25}
Loss_{BIB(v,z)}=Loss_{1}+Loss_{2}+\beta\cdot Loss_{3},
\end{equation}
where $\beta$ is a hyperparameter. To sum up, $Loss_1$ and $Loss_2$ are balanced cross entropy losses (BSCLoss) to maximize $I(v;y)$ and $I(z;y)$. $Loss_3$ is the variational self-distillation loss (VSDLoss) to force $I(z;y)$ to approximate $I(v;y)$. Therefore, we can optimize the IB objective implicitly by minimizing $Loss_{BIB(v,z)}$. The code of BIB loss is presented in Appendix \ref{BIB code}.

\subsection{Mixture of Balanced Information Bottleneck (MBIB)}
\label{section_MBIB}
From the perspective of information theory, BIB assumes that the CNN-derived observation $v$ of the input $x$ satisfies $I(v; y) = I(x; y)$, as stated in Assumption~\ref{assumption_sufficiency}, implying that $v$ preserves all mutual information between $x$ and the label $y$. However, this assumption encounters limitations due to the inherent data processing inequality \citep{88} in the information processing chain.

Specifically, consider a CNN network composed of three consecutive parts, denoted as CNN\textsubscript{1}, CNN\textsubscript{2}, and CNN\textsubscript{3}, as illustrated in Figure~\ref{fig2}. Let \( v_1 \), \( v_2 \), and \( v_3 \) denote the intermediate representations extracted from these partial networks, where \( v_3 \) corresponds to the full observation \( v \) used in BIB (Section~\ref{section_BIB}). We formally state the following result:

\begin{theorem}[Data Processing Inequality]
\label{theorem_dpi}
Let \( v_1 \), \( v_2 \), and \( v_3 \) be intermediate representations obtained from partial networks CNN\textsubscript{1}, CNN\textsubscript{2}, and CNN\textsubscript{3}, respectively. Then the following inequality holds:
\begin{equation}
I(v_3; y) \le I(v_2; y) \le I(v_1; y).
\end{equation}
\end{theorem}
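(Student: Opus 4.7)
The plan is to recognize that this claim is the classical Data Processing Inequality of Cover and Thomas, applied to the layer-wise Markov chain induced by the feed-forward CNN architecture. First I would establish that $y$, $x$, $v_1$, $v_2$, $v_3$ form the Markov chain $y \leftrightarrow x \leftrightarrow v_1 \leftrightarrow v_2 \leftrightarrow v_3$. This holds because $v_1$ is a (possibly stochastic) function of $x$ alone, $v_2$ is a function of $v_1$ alone, and $v_3$ is a function of $v_2$ alone, so the joint density factorizes as
\begin{equation*}
p(y, x, v_1, v_2, v_3) = p(y, x)\, p(v_1 \mid x)\, p(v_2 \mid v_1)\, p(v_3 \mid v_2).
\end{equation*}

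Next I would apply the chain rule of mutual information to the pair $(v_1, v_2)$ in two different orderings with respect to $y$:
\begin{equation*}
I(y; v_1, v_2) = I(y; v_1) + I(y; v_2 \mid v_1) = I(y; v_2) + I(y; v_1 \mid v_2).
\end{equation*}
By the Markov property $y \leftrightarrow v_1 \leftrightarrow v_2$, the conditional mutual information $I(y; v_2 \mid v_1)$ vanishes, whereas $I(y; v_1 \mid v_2) \ge 0$ by the non-negativity of mutual information. Rearranging yields $I(v_1; y) \ge I(v_2; y)$. Repeating the identical argument for the pair $(v_2, v_3)$ and the sub-chain $y \leftrightarrow v_2 \leftrightarrow v_3$ gives $I(v_2; y) \ge I(v_3; y)$, and concatenating the two inequalities produces the claim.

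The main subtlety — rather than a true obstacle — is justifying the Markov property rigorously when the CNN layers are interpreted stochastically, as they are in the variational formulation $q(z \mid v)$ used throughout BIB. The key requirement is that any randomness injected at a later layer is exogenous, i.e., independent of $y$ given the input to that layer; under this standard convention (which is implicit in the variational IB setup) the factorization written above is valid. Once this is recorded, the remainder is a two-line textbook application of DPI, and no heavy machinery is needed.
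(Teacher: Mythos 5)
Your proof is correct and is exactly the standard argument the paper implicitly relies on: the paper states this result as the classical Data Processing Inequality and cites it without writing out a proof, so your explicit derivation (establishing the Markov chain $y \leftrightarrow x \leftrightarrow v_1 \leftrightarrow v_2 \leftrightarrow v_3$ from the feed-forward structure, then applying the chain rule for mutual information and the non-negativity of conditional mutual information) simply fills in what the paper leaves to the reference. Your remark about requiring any stochasticity in later layers to be exogenous given that layer's input is the right caveat for the factorization to hold, and nothing further is needed.
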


This theorem reflects the fundamental principle that, as data is progressively transformed and compressed through the network, the mutual information between the representation and the target label inevitably diminishes. As a result, Assumption~\ref{assumption_sufficiency} made in BIB and prior works \citep{39} does not always hold in practice; instead, we generally have \( I(v; y) \le I(x; y) \) due to inevitable information loss along the processing chain. Consequently, the final representation \( v \) (\( v_3 \) in Figure~\ref{fig2}) is not a sufficient observation containing all mutual information between \( x \) and \( y \), and solely applying BIB between \( v \) and \( z \) may fail to fully exploit the label-relevant information.

One promising model enhancement strategy is leveraging 
the label-related information retained in $v_1$ and $v_2$, given that they contain more mutual information with $y$ than $v_3$ ,as indicated in Theorem~\ref{theorem_dpi}. By applying BIB between $v_1$ and $z$ ($BIB(v_1,z)$), as well as between $v_2$ and $z$ ($BIB(v_2,z)$), we are able to conserve a greater amount of label-relevant information in $v_1$ and $v_2$. This enables feature $z$ a sufficient and comprehensive representation of $v_1$, $v_2$, and $v_3$ simultaneously and containing more label-related information, ultimately improving the model’s ability to capture long-tailed label information.

\begin{figure*}[tbph]
\centering
\includegraphics[height=4.0cm]{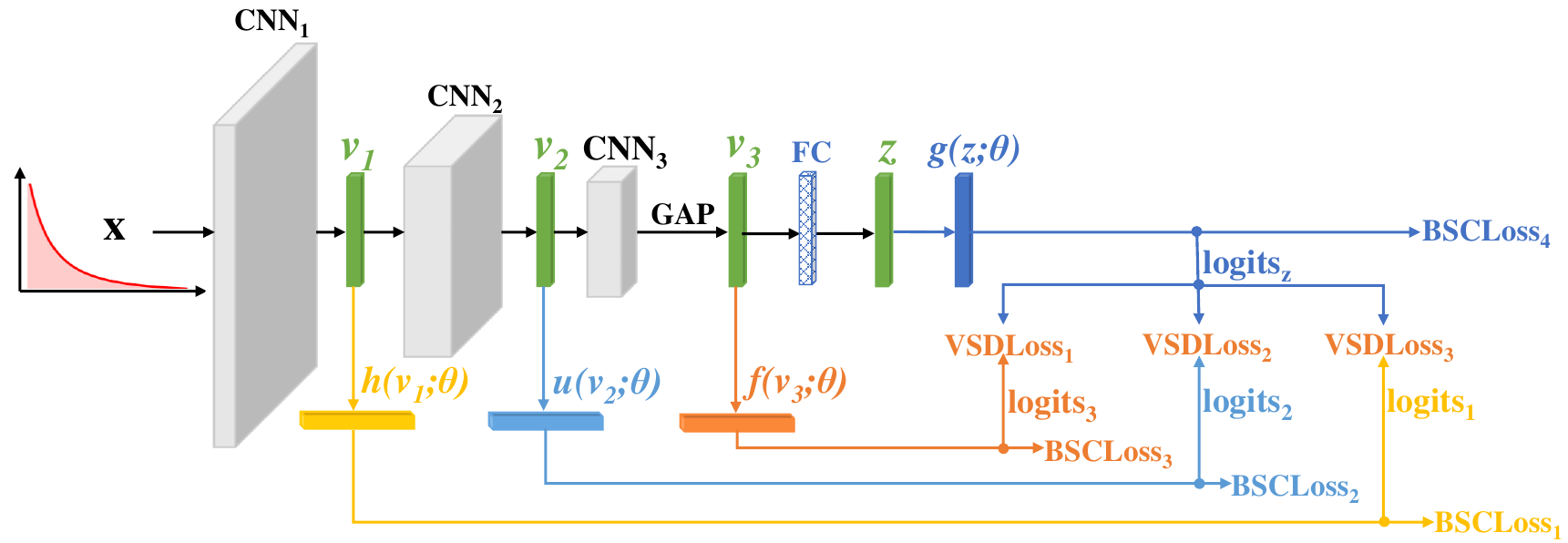}
\caption{
The network structure of MBIB. $CNN_{1}$, $CNN_{2}$ and $CNN_{3}$ are different parts of the CNN network. FC means the Fully Connected layer, and GAP means Global Average Pooling. ${{h(v_1;\theta )}}$, ${{u(v_2;\theta )}}$, ${{f(v_3;\theta )}}$ and ${{g(z;\theta )}}$ are classifiers. The output is the mean of the outputs from ${{f(v_3;\theta )}}$ and ${{g(z;\theta )}}$.
}
\label{fig2}
\end{figure*}
Hence, we propose a novel network structure, which consists of the \textit{mixture} of \textit{multiple balanced information bottleneck} (i.e., MBIB), as shown in Figure \ref{fig2}. MBIB is a network capable of simultaneously optimizing diverse information bottleneck. The overall loss function is:
\begin{equation}\label{eq26}
\begin{aligned}
    Loss_{MBIB} = a \cdot Loss_{BIB(v_1,z)} + b \cdot Loss_{BIB(v_2,z)} + Loss_{BIB(v_3,z)},
\end{aligned}
\end{equation}

where $a$ and $b$ are hyperparameters to adjust the proportions of different BIBs. Specifically, when $a = 0$ and $b = 0$, MBIB degrades to the BIB proposed in Section \ref{section_BIB}. As the values of $a$ and $b$ increase, the model emphasizes information from $v_1$ and $v_2$ more. Each $Loss_{BIB}$ is composed of three losses as below:
\begin{equation}\label{eq27}
Loss_{BIB}=BSCLoss_{1}+BSCLoss_{2}+\beta\cdot VSDLoss,
\end{equation}
where there are two balanced cross entropy losses and one variational self-distillation loss.

From the perspective of self-distillation, we can further explore the effectiveness of MBIB. VSDLosses in Eqs. \eqref{eq25} and \eqref{eq27} facilitate self-distillation processes among layers at different depths and the output layer. For example, the VSDLoss in $BIB(v_1,z)$ is optimized in the following form:
\begin{equation}\label{eq26}
\begin{aligned}
    min VSDLoss_{BIB(v_1,z)}  \Leftrightarrow min \mathbb{E}_{q(v_1|x)}[D_{KL}[q(y|v_1)||q(y|z)]].
\end{aligned}
\end{equation}
To optimize this objective, we minimize the KL-divergence between $q(y|z)$ and $q(y|v_1)$, thereby performing distillation between $v_1$ and $z$. This process effectively integrates knowledge from $v_1$ and $z$.

Previous studies \citep{89} have underscored that shallow parts of the deep model are able to perform better on certain tail classes, and layers of different depths excel at recognizing different classes in long-tailed data. This suggests that fusing knowledge from shallow and deep layers can better fit the long-tailed data, and VSDLosses achieve this through self-distillation. 

Therefore, MBIB enables a more comprehensive output that integrates the knowledge of different shallow layers. At the same time, this approach allows shallow layers to learn from deeper ones. Consequently, this strategy can enhance the representation learning capability of the network by integrating knowledge form different depths.

\section{Experiments}
\subsection{Datasets and Setup}
\textbf{Long-Tailed CIFAR-100.} CIFAR-100 includes 60K images, of which 50K images are for training and 10K for verification.  There are 100 classes in total. We use the same long-tailed version of the CIFAR-100 dataset as in \cite{15} for a fair comparison.  The imbalance degree of the dataset is controlled by the Imbalanced Factor ($IF={{N}_{\max }}/{{N}_{\min }}$, where $N_{\max}$ represents the highest frequency and $N_{\min}$ represents the lowest frequency). We conduct experiments on CIFAR-100-LT with IF of 100, 50, and 10.

\noindent\textbf{ImageNet-LT.} ImageNet-LT is a subset of long-tailed distribution sampled from ImageNet through Pareto distribution. The ImageNet-LT training set contains 115.8K images, and a total of 1000 classes. The highest frequency is 1280, and the lowest frequency is 5. There are 20 images of each class in the validation set, and 50 images in each class of the test set.

\noindent\textbf{iNaturalist 2018.} iNaturalist 2018 is a large-scale, long-tailed fine-grained dataset. iNaturalist 2018 includes 437.5K images and 8142 classes in total, with 1000 samples for the class with the highest frequency and 2 samples for the class with the lowest frequency.

 According to the setting in \cite{8}, we divide the dataset into three subsets according to the number of samples: Many shot (more than 100 samples), Medium shot (between 20 and 100 samples), and Few shot (less than 20 samples).

\subsection{Implementation Details}
\textbf{Training details on CIFAR-100-LT.} For CIFAR-100-LT, we process samples in the same way as in \cite{15}. We use ResNet32 as the backbone network. To keep consistent with the previous settings \citep{15}, we use the SGD optimizer with a momentum of 0.9 and weight decay of 0.0003. We train 200 epochs for each model. The initial learning rate is 0.1, and the first five epochs use the linear warm-up. The learning rate decays by 0.01 at the ${160}^{th}$ and the $180^{th}$ epoch. The batch size of all experiments is 128.

\noindent\textbf{Training details on ImageNet-LT.} For ImageNet-LT, we report the results of two backbone networks: ResNet10 and ResNeXt50. We train 90 epochs for all models, using the SGD optimizer with a momentum of 0.9 and weight decay of 0.0005. For ResNet10, we use a cosine learning rate schedule decaying from 0.05 to 0 with batch size of 128. For ResNeXt50, we use a cosine learning rate schedule decaying from 0.025 to 0 with batch size of 64.

\noindent\textbf{Training details on iNaturalist 2018.} For the iNaturalist 2018, we use ResNet50 as the backbone network. The model trained 90 or 200 epochs using the SGD optimizer with a momentum of 0.9 and weight decay of 0.0001. The batch size is 64, and we use a cosine learning rate schedule decaying from 0.025 to 0.

For the setting of hyperparameters, we take $\beta$ in $\{0, 1, 2, 3, 4, 5\}$ according to different datasets. For all of the datasets, we use $a=0.1$, $b=0.3$ and $m=0.1$. For CIFAR100-LT, we use $\gamma=0$, and for ImageNet-LT and iNaturalist 2018, we use $\gamma=0.5$. To make the results more robust, we use the mean of $f(v;\theta)$ and $g(z;\theta)$ as the final result of the test sample.

\subsection{Main Results}
\setlength{\tabcolsep}{13pt}
\begin{table}[btp]
\renewcommand{\arraystretch}{0.9}
\begin{center}
\caption{Top-1 accuracy(\%) of ResNet32 on CIFAR-100-LT with 200 epochs. Data in bold is the overall accuracy of our methods, while underlined data indicates overall performances superior to ours. This formatting is consistent in other tables.}
\label{table1}
\begin{tabular}{c|ccc}
\toprule
\multirow{2}{*}{Method} & \multicolumn{3}{c}{Imbalanced Factor}\\
\cline{2-4}
 & 100 & 50 & 10\\
\midrule
\emph{One-Stage} &&&\\
CE                      & 38.3 & 43.9 & 55.7 \\
Focal Loss    & 38.4 & 44.3 & 55.8 \\
LDAM-DRW       & 42.0 & 46.6 & 58.7 \\
BBN            & 42.6 & 47.0 & 59.1 \\
CDT            & 44.3 &   -  & 58.9 \\
BSCE           & 42.7 & 47.2 & 58.5 \\
BIB(Ours)             & \textbf{44.9} & \textbf{49.8} & \textbf{60.4} \\
MBIB(Ours)            & \textbf{47.5} & \textbf{51.2} & \textbf{60.9} \\
\midrule
\emph{Two-Stage} &&&\\
cRT           & 41.2 & 46.8 & 57.9 \\
$\tau$-norm    & 41.1 & 46.7 & 57.1 \\
KCL            & 42.8 & 46.3 & 57.6 \\
TSC            & 43.8 & 47.4 & 59.0 \\
SSP            & 43.4 & 47.1 & 58.9 \\
\midrule
\emph{MoE} &&&\\
BBN            & 42.6 & 47.0 & 59.1 \\
RIDE(2E)           & \underline{47.0} & - & - \\
RIDE(3E)           & \underline{48.0} & - & - \\
SADE           & \underline{49.8} & \underline{53.9} & \underline{63.6} \\
\midrule
\emph{Others} &&&\\
DiVE           & 45.4 & 51.1 & \underline{62.0} \\
\bottomrule
\end{tabular}
\end{center}
\end{table}

\textbf{Baseline.} We compared four mainstream approaches, including one-stage, two-stage, MoE and other approaches such as distillation and contrastive learning. The one-stage approach includes Focal loss \citep{27}, LDAM \citep{15}, BSCE \citep{16}, weight balancing\citep{99}, RBL\citep{100}, etc. The two-stage approach includes cRT \citep{17}, KCL \citep{30}, TSC \citep{49}, SSP \citep{84}, WCDAS\citep{97}, CC-SAM\citep{98}, etc. The MoE approach includes  BBN\citep{91}, RIDE\citep{90}, SADE\citep{92}. The distillation approach is DiVE\citep{93}. The contrastive learning method is PaCo\citep{94}. Especially, if only the results of the $v$  are concerned, BIB degenerates to BSCE.

\noindent\textbf{CIFAR-100-LT.} Table \ref{table1} compares BIB and MBIB with baseline methods on CIFAR-100-LT. As the results show, BIB achieves improvements on all imbalanced factors. MBIB is higher than most of the methods and even outperforms some MoE models. Although some MoE methods outperform MBIB, they incur much larger computational costs than our methods.

\noindent\textbf{ImageNet-LT.} Table \ref{table2} compares BIB and MBIB with baseline methods on ImageNet-LT. We conduct experiments on two backbones networks: ResNet10 and ResNeXt50. The results of RIDE, SADE and PaCo are from \cite{96}. The experimental results show that BIB and MBIB can achieve consistent performance improvement on both small and large neural networks. The overall accuracy of MBIB is higher than all of the baseline methods except SADE.

\setlength{\tabcolsep}{10pt}
\begin{table}[bt]
\renewcommand{\arraystretch}{1.0}
\begin{center}
\caption{Top-1 accuracy(\%) of ResNet10 and ResNeXt50 on ImageNet-LT with 90 epochs. A $\ddagger${} or \textdagger{} indicates training extended to 180 or 200 epochs.}
\label{table2}
{\fontsize{10}{11.5}\selectfont
\begin{tabular}{c|cccc}
\toprule
\multirow{2}{*}{Method} & \multicolumn{4}{c}{ResNet10/ResNext50}\\
\cline{2-5}
 & Many & Medium & Few & All\\
\midrule
\emph{One-Stage} &&&&\\
CE                      & 57.0/65.9 & 25.7/37.5 & 3.5/7.7  & 34.8/44.4\\
Focal Loss    & 36.4/64.3 & 29.9/37.1 & 16.0/8.2 & 30.5/43.7\\
LDAM-DRS      &   -/63.7  &   -/47.6  &   -/30.0  & 36.0/51.4\\
LADE          &   -/62.3  &   -/49.3  &   -/31.2  &   -/51.9\\
BSCE          & 53.4/62.2 & 38.5/48.8 & 17.0/29.7 & 41.3/51.4\\   
weight balancing\textdagger          & -/62.0 & -/49.7 & -/41.0 & -/53.3\\
RBL\textdagger          & -/64.8 & -/49.6 & -/34.2 & -/53.3\\
BIB(Ours) & 54.7/64.7 & 40.0/51.2 & 21.7/32.7 & \textbf{43.2}/\textbf{53.9}\\
MBIB(Ours) & 56.4/67.0 & 41.8/52.8 & 23.2/33.5& \textbf{44.9}/\textbf{55.7}\\
\midrule
\emph{Two-Stage} &&&&\\
cRT           &   -/61.8  &   -/46.2  &   -/27.4  & 41.8/49.6\\
$\tau$-norm   &   -/59.1  &   -/46.9  &   -/30.7  & 40.6/49.4\\
DisAlign      &   -/61.5  &   -/50.7  &   -/33.1  &   -/52.6\\
WCDAS      &   53.8/-  &   41.7/-  &   25.3/-  &   44.1/-\\
CC-SAM      &   -/63.1  &   -/53.4  &   -/41.4  &   -/55.4\\
SRepr$\ddagger$      &   -/-  &   -/-  &   -/-  &   -/54.6\\
\midrule
\emph{MoE} &&&&\\
BBN           &   -/40.0  &   -/43.3  &   -/40.8  & -/41.2\\
RIDE(3E)          &   -/66.9  &   -/52.3  &   -/34.5  & 44.3/55.5\\
SADE          &   -/65.3  &   -/55.2 &    -/42.0 & -/\underline{57.3}\\
\midrule
\emph{Others} &&&&\\
DiVE           &   -/64.1  &   -/50.4  &   -/31.5  & -/53.1\\
PaCo          &   -/59.7  &   -/51.7  &   -/36.6  & -/52.7\\
\bottomrule
\end{tabular}}
\end{center}
\end{table}

\setlength{\tabcolsep}{5pt}
\begin{table*}[tbh]
\renewcommand{\arraystretch}{1.0}
\begin{center}
\caption{Top-1 accuracy(\%) of ResNet50 on iNaturalist 2018.}
\label{table3}
{\fontsize{10}{11.5}\selectfont
\begin{tabular}{c|c|ccc|c}
\toprule
Method & Epoch & Many & Medium & Few & All\\
\midrule
\emph{One-Stage} &&&&&\\
CE                 & 90/200 & 72.2/75.7 & 63.0/66.9 & 57.2/61.7 & 61.7/65.8 \\
ResLT \citep{86}    & 200    & 68.5 & 69.9 & 70.4 & 70.2 \\
LADE \citep{85}     & 200    & - & - & - & 70.0 \\
BSCE \citep{16}     & 90/200 & 67.2/69.6 & 66.5/69.8 & 67.4/69.7 & 66.9/69.8 \\
weight balancing \citep{99}   & 200    & 71.0 & 70.3 & 69.4 & 70.0 \\
BIB(Ours) & 90/200 & 70.9/73.9 & 69.9/72.9 & 69.6/72.1 & \textbf{69.9}/\textbf{72.7}\\
MBIB(Ours) & 90/200 & 70.7/72.6 & 70.6/73.6 & 70.2/73.1 & \textbf{70.4}/\textbf{73.3}\\
\midrule
\emph{Two-Stage} &&&&&\\
cRT \citep{17}      & 90/200+10 & 69.0/73.2 & 66.0/68.8 & 63.2/66.1 & 65.2/68.2 \\
$\tau$-norm \citep{17}   & 90/200+10 & 65.6/71.1 & 65.3/68.9 & 65.9/69.3 & 65.6/69.3 \\
KCL \citep{30}      & 200+30     &   -  &   -  &   -  & 68.6  \\
TSC \citep{49}      & 400+30  & 72.6 & 70.6 & 67.8 & 69.7 \\
DisAlign \citep{33} & 90/200+30    & 64.1/69.0 & 68.5/71.1 & 67.9/70.2 & 67.8/70.6 \\
WCDAS \citep{97} & 200+30    & 75.5 & 72.3 & 69.8 & 71.8 \\
CC-SAM \citep{98} & 200+30    & 65.4 & 70.9 & 72.2 & 70.9 \\
SRepr \citep{101} & 200+20    & - & - & - & 70.8 \\
\midrule
\emph{MoE} &&&&&\\
BBN \citep{91}      & 90/180 & 49.4/- & 70.8/- & 65.3/- & 66.3/69.6 \\
RIDE(4E) \citep{90}   & 100/200 & 70.9/70.5 & 72.4/73.7 & 73.1/73.3 & \underline{72.6}/73.2 \\
SADE \citep{92}      & 200+5 & 74.5 & 72.5 & 73.0 & 72.9 \\
\midrule
\emph{Others} &&&&&\\
DiVE \citep{93}      & 90/200 & -/- & -/- & -/- & 69.1/71.7 \\
PaCo \citep{94}   & 200 & 68.5 & 72.0 & 71.8 & 71.6 \\
\bottomrule
\end{tabular}
}
\end{center}
\end{table*}

\noindent\textbf{iNaturalist 2018.} Table \ref{table3} compares BIB with baseline methods on iNaturalist 2018. Notably, MBIB achieves the best performance for 200 training epochs among all baseline methods including MoE models.

\subsection{Ablation Study}
\textbf{How the value of $\beta$ affects our methods?} $\beta$ is a hyperparameter in the loss function, which affects the degree of information compression by the network. Figure \ref{fig4} shows the impact of different $\beta$ on the overall accuracy of the CIFAR100-LT dataset.
The results show that the optimal $\beta$ may be different for different imbalanced factors and methods, and we can make more fine adjustments if necessary. However, we think this may not be necessary, because simple search in $\{0, 1, 2, 3, 4, 5\}$ can already obtain satisfactory results.

\noindent\textbf{How the the value of $a$ and $b$ affects our methods?} $a$ and $b$ are hyperparameters that adjust the proportions of different BIB components within the MBIB framework. Figure \ref{fig5} shows the heatmap of accuracy on the CIFAR100-LT dataset with respect to different $a$ and $b$. The results show that the values of $a$ and $b$ significantly influence the overall accuracy. Similar to the $\beta$, we can fine-tune $a$ and $b$ to get the models having better performances.

\noindent\textbf{How the the quantity of observation $v$ affects our methods?} We utilize three observation $v$ in our methods. Ablations on $a$ and $b$ show that when $a$ or $b$ is zero, 3-MBIB (three-observation MBIB) degenerates to 2-MBIB and accuracy often lowers. We also investigated 4-MBIB, 5-MBIB and 6-MBIB  with results shown in Table \ref{tables1}. We found the quantity of observation $v$ affects MBIB's performance, improving with more $v$. However, once $v$ exceeds 3, the performance improvement diminishes and even starts to decline. We attribute this to two key factors. Firstly, 3-MBIB has fully utilized the useful information in intermediate observations, thereby limiting the additional information gained by further increasing the number of $v$. Secondly, introducing more BIB objectives complicates the optimization process, making it harder for the model to achieve effective optimization, which ultimately leads to performance decrease for 5-MBIB and 6-MBIB.

\begin{figure}[tbp]
\centering
\includegraphics[height=4.5cm]{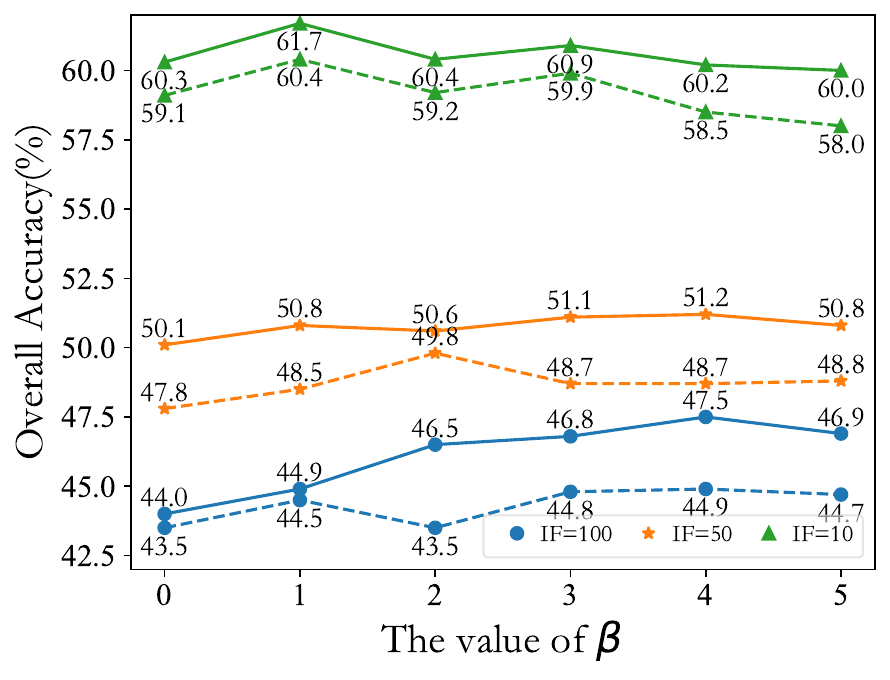}
\caption{
The impact of different $\beta$ on the overall accuracy of CIFAR-100-LT (we fixed $a = 0.1$ and $b = 0.3$). The solid lines show the results of MBIB and the dashed lines correspond to BIB. 
}
\label{fig4}
\end{figure}

\begin{table}[hbtp]
\centering
\caption{Accuracy of MBIB with different quantities of observation $v$. We set $a = 0$ in 2-MBIB and $a = 0.3, b = 0.3$ in the others. 4-MBIB(x) denotes the introduction of $BIB(x,z)$ to 3-MBIB.}
\label{tables1}
{\fontsize{9}{12}\selectfont
\setlength{\tabcolsep}{5pt}
\begin{tabular}{c|c|c|c|c|c|c}
\toprule
Quantity of $v$ & 2-MBIB & 3-MBIB & 4-MBIB(x) & 4-MBIB & 5-MBIB & 6-MBIB\\
\midrule
Accuracy(\%) & 45.2 & 46.6 & 46.8 & 46.9 & 46.1 & 45.4 \\
\bottomrule
\end{tabular}
}
\end{table}

\setlength{\tabcolsep}{3pt}
\begin{table}[bth]
\renewcommand{\arraystretch}{1}
\begin{center}
\caption{The value of $\rho$ of different methods on the CIFAR-100-LT (IF=100) testing set.}
\label{table4}
{\fontsize{9.5}{11.5}\selectfont
\begin{tabular}{c|c|cccc}
\toprule
Metric & Methods & Many & Medium & Few & All \\
\midrule
\multirow{4}{*}{${{\rho }}$} & BSCE & 1.26 & 1.30 & 1.48 & 1.34 \\
& BSCE\_MLP & 0.85 & 0.98 & 1.29 & 0.99 \\
& BIB\_v & 0.94 & 1.06 & 1.37 & 1.06 \\
& BIB\_z & 0.80 & 0.91 & 1.15 & 0.91 \\
& MBIB\_v & 1.01 & 1.04 & 1.31 & 1.09 \\
& MBIB\_z & \textbf{0.78} & \textbf{0.86} & \textbf{1.02} & \textbf{0.86} \\
\bottomrule
\end{tabular}}
\end{center}
\end{table}

Due to the page limit, the impact of different parts of BIB, as well as comparisons between BIB and BSCE are provided in Appendix \ref{diff BIB} and \ref{comp BIB BSCE}.

\subsection{Analysis of the Posterior Probability Distribution}

Ideally, the mean positive posterior probability of per class should be equal to 1:
\begin{equation}
\overline{q}({{y}_{i}}|x)=\frac{1}{{{n}_{i}}}\sum\limits_{j=1}^{{{n}_{i}}}{q({{y}_{i}}|{{x}_{j}})}=1.
\label{eq post}
\end{equation}
This means that the closer $\overline{q}({{y}_{i}}|x)$ is to 1, the better. The experiment results reveal BIB's superiority in most classes compared to BSCE and MBIB behaves better than BIB in the tail classes. The detailed results and analysis are shown in Appendix \ref{post}.

\subsection{Analysis of the Learned Representation}
A good representation should have the following characteristics: the representations of the same class are very close, and the representations between different classes are far away \citep{103}. We can evaluate the quality of the representation by the mean of the average intra-class distance (${{D}_{Intra}}$), the mean inter-class distance (${{D}_{Inter}}$), and the ratio ($\rho$) between them. ${{D}_{Intra}}$, ${{D}_{Inter}}$ and $\rho$ are calculated as follows:
\begin{equation}
\label{eq11}
{{D}_{Intra}}=\frac{1}{K}\sum\limits_{i=1}^{K}{\frac{1}{|{{R}_{i}}{{|}^{2}}}}\sum\limits_{{{r}_{j}},{{r}_{k}}\in {{R}_{i}}}{\parallel {{r}_{j}}-{{r}_{k}}{{\parallel }_{2}}},
\end{equation}
\begin{equation}
\label{eq12}
{{D}_{Inter}}=\frac{1}{K(K-1)}\sum\limits_{i=1}^{K}{\sum\limits_{j=1,j\ne i}^{K}{\parallel{{c}_{i}}-{{c}_{j}}{{\parallel }_{2}}}},
\end{equation}
\begin{equation}\label{eq13}
\rho =\frac{{{D}_{Intra}}}{{{D}_{Inter}}},
\end{equation}
where ${{R}_{i}}$ is the representation set of class $i$, and ${{c}_{i}}$ is the class center of class $i$, i.e. ${{c}_{i}}=\frac{1}{|{{R}_{i}}|}\sum\limits_{{{r}_{j}}\in {{R}_{i}}}{{{r}_{j}}}$. 
The better the representation, the smaller the value of $\rho $.
\begin{figure}[tbp]
\centering
\includegraphics[height=4.5cm]{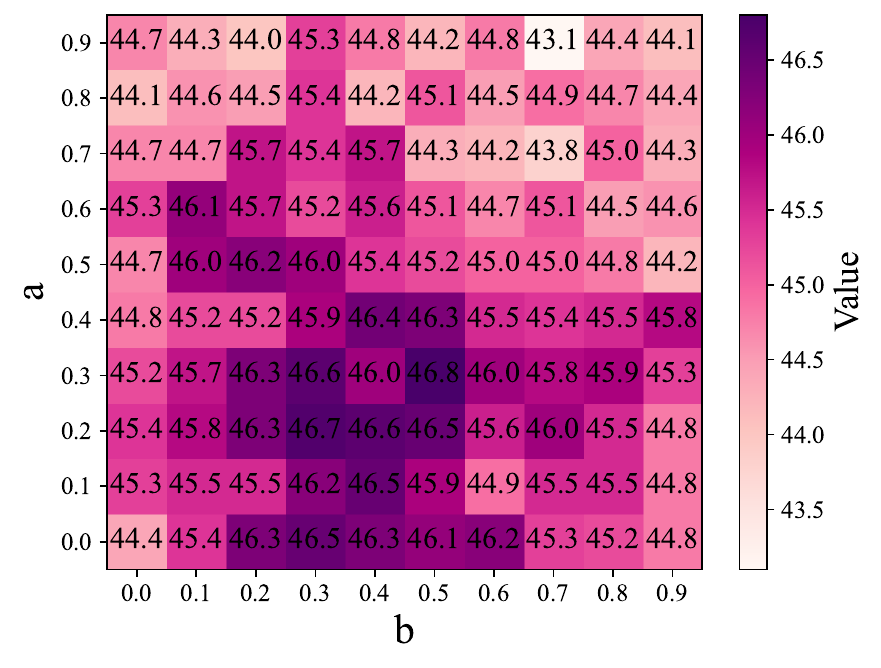}
\caption{
The impact of different $a$ and $b$ on the overall MBIB accuracy of CIFAR-100-LT (we fixed $\beta = 5$).
}
\label{fig5}
\end{figure}

Table \ref{table4} compares $\rho $ obtained by different methods on the testing set. The value of $\rho$ obtained by BIB and MBIB is smaller than that of BSCE and BSCE\_MLP (BSCE\_MLP indicates that the network structure is the same as branch z in BIB, and the loss function is BSCE). In addition, We visualize the representation of the test set obtained by different models using t-SNE \citep{83}. The visualization and its analysis are shown in Appendix \ref{tsne}.

\begin{table}[btph]
\centering
\caption{Efficiency comparisons (test on size 3 × 640 × 640)}
\label{tables2}
{\fontsize{8.5}{11}\selectfont
\setlength{\tabcolsep}{5pt}
\begin{tabular}{c|c|c|c}
\toprule
Method & MBIB & SADE & RIDE(4E) \\
\midrule
Params(k) & \textbf{486.02} & 783.86 & 1018.00 \\
\midrule
FLOPs(G) & \textbf{27.93} & 40.69 & 50.98 \\
\bottomrule
\end{tabular}
}
\end{table}

\subsection{The efficiency compared with MoE} 
MoE methods are often the state-of-the-art approaches in long-tailed recognition. Although they can achieve higher accuracy on some datasets than our methods, they always require larger computational resources. Therefore, MoE methods are less practical than our methods in scenarios with limited computational resources. We compare the efficiency of MBIB with MoE methods (RIDE and SADE) in Table \ref{tables2}.  

\section{Discussion}
\label{section_discussion}
We have empirically observed a significant enhancement in network performance upon the introduction of BIB. Additionally, we undertook an exploration of two alternative mixture of BIB structures, as depicted in Figure \ref{fig8}. One structure applied sequential BIB connections between layer pairs (e.g., $BIB(v_1,v_2)$, $BIB(v_2,v_3)$, and $BIB(v_3,z)$), as shown in Figure \ref{fig8}(a), while the other integrated BIB connections among all feature representations as shown in Figure \ref{fig8}(b). We named them respectively as SE-MBIB and ALL-MBIB. However, experiments on the CIFAR100-LT dataset have revealed that both SE-MBIB and ALL-MBIB often exhibit worse performance compared to our MBIB method, as shown in Appendix \ref{all bib}. The original intention to introduce sequential BIB connections was to propagate the optimal representation throughout the network in a cascading manner, where $v_2$ is considered a sufficient representation of $v_1$, $v_3$ is a sufficient representation of $v_2$, and $z$ is a sufficient representation of $v_3$. This would ultimately result in $z$ serving as a sufficient representation of $v_1$, consequently increasing mutual information $I(z;y)$. However, this notion is inherently less rigorous, as even if $v_2$ is a sufficient representation of $v_1$ and $v_3$ is a sufficient representation of $v_2$, $v_3$ does not necessarily constitute a sufficient representation of $v_1$. While each individual BIB operation guarantees the output as an optimal and sufficient representation of the input, the sequential BIB connections do not inherently ensure that the global output is an optimal and sufficient representation of the initial input. In other words, the representation may deviate from the initial inputs and fall into local optima instead of global optima during the sequential propagation. Thus, SE-MBIB with the sequential BIB connections appears to be less justifiable. The second structure ALL-MBIB, which introduces the improper sequential BIB connections on top of MBIB, is found to be detrimental to performance.

While our proposed methods demonstrate performance that falls slightly short of some state-of-the-art mixture of experts (MoE) models for long-tailed datasets, this presents a rich avenue for future research in BIB application. MoE methods involve the integration of multiple expert networks, each specialized in the recognition of distinct data segments. Our MBIB method could potentially be integrated into MoE frameworks. For example, MBIB may be incorporated into each expert network, enhancing the feature learning effectiveness of each expert, and thereby improving the overall performance. Inspired by two-stage methods, our network could explore staged training strategies, such as decoupling the training of the feature extractor and classifier, to optimize both components and ultimately achieve superior performance.

\begin{figure}[tbp]
\centering
\includegraphics[height=3.5cm]{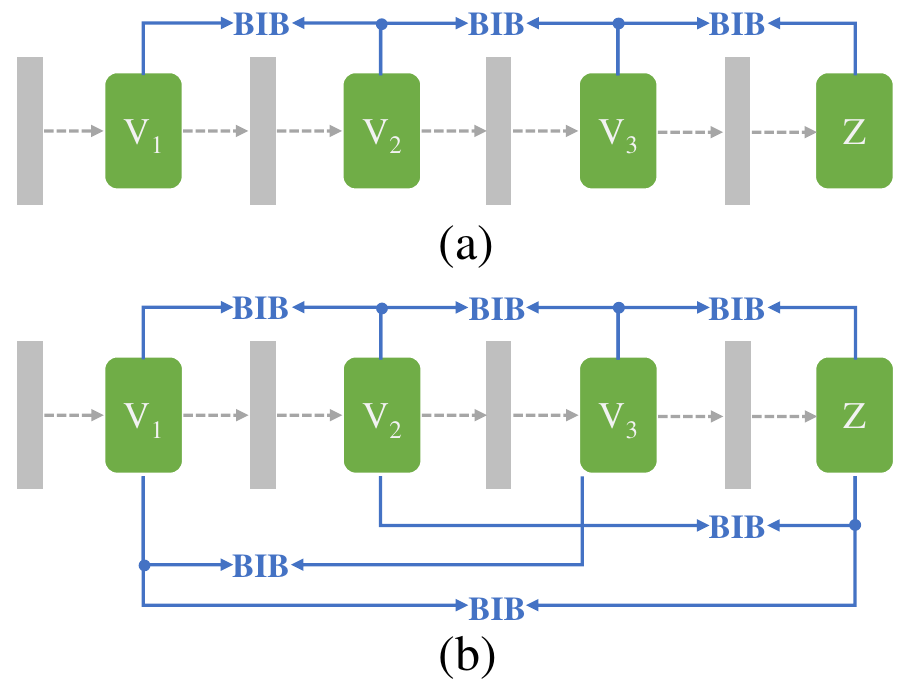}
\caption{Two alternative multi-BIB structures.}
\label{fig8}
\end{figure}

\section{Conclusion}
This paper proposes end-to-end learning methods named BIB and MBIB for long-tailed visual recognition based on information bottleneck theory. BIB uses self-distillation to optimize the objective and re-balance the classes, improving tail class performance without damaging head class performance. MBIB optimizes various information bottleneck within a single network simultaneously to utilize more information related to labels. Moreover, MBIB can fuse knowledge from different depths of the network to better fit the long-tailed data, further improving the model performance. Our experiments show that the quality of the feature spaces learned by BIB and MBIB is better than that of the re-balancing method like BSCE.Experiments on datasets like CIFAR100-LT, ImageNet-LT, and iNaturalist 2018 show that both BIB and MBIB perform well, even better than some recently proposed two-stage methods and MoE methods.

\bibliography{main}
\bibliographystyle{tmlr}

\appendix
\section{Proof of the Re-Balance Technique}
\label{proof}
According to VIB, $-I(v,y)$ and $-I(z,y)$ are bounded as ${\mathbb{E}_{p(v,y)}}-\log q(y|v)$ and ${\mathbb{E}_{p(z,y)}}-\log q(y|z)$. However, when the labels are long-tailed, we need to re-balance them. Our purpose is to train an end-to-end model, that is, the output of the model is ${{p}_{t}}(y|v)$. According to the properties of the conditional probability, we have
\begin{equation}\label{eq17}
{{p}_{s}}({{y}_{i}}|v){{p}_{s}}(v)={{p}_{s}}(v|{{y}_{i}}){{p}_{s}}({{y}_{i}}),
\end{equation}
\begin{equation}\label{eq18}
{{p}_{t}}({{y}_{i}}|v){{p}_{t}}(v)={{p}_{t}}(v|{{y}_{i}}){{p}_{t}}({{y}_{i}}),
\end{equation}
where the subscript $s$ represents the training set distribution, and the subscript $t$ represents the test set distribution.
Since the training set and the testing set come from the same image domain, it can be assumed that ${{p}_{s}}(v)={{p}_{t}}(v)$ and ${{p}_{s}}(v|{{y}_{i}})={{p}_{t}}(v|{{y}_{i}})$. We have 
\begin{equation}\label{eq19}
\frac{{{p}_{s}}({{y}_{i}}|v)}{{{p}_{s}}({{y}_{i}})}=\frac{{{p}_{t}}({{y}_{i}}|v)}{{{p}_{t}}({{y}_{i}})}.
\end{equation}
Due to ${{p}_{s}}({{y}_{i}})=\frac{{{n}_{i}}}{\sum\limits_{j=1}^{K}{{{n}_{j}}}}=\frac{{{n}_{i}}}{N}$ and ${{p}_{t}}({{y}_{i}})=\frac{1}{K}$, we have ${{p}_{s}}({{y}_{i}}|v)\propto {{n}_{i}}{{p}_{t}}({{y}_{i}}|v)$. By normalizing ${{p}_{s}}({{y}_{i}}|v)$, we get
\begin{equation}\label{eq20}
{{p}_{s}}({{y}_{i}}|v)=\frac{{{n}_{i}}{{p}_{t}}({{y}_{i}}|v)}{\sum\limits_{j=1}^{K}{{{n}_{j}}{{p}_{t}}({{y}_{j}}|v)}}.
\end{equation}
We can obtain ${{p}_{t}}({{y}_{i}}|v)$ by the output of model, so that
\begin{equation}\label{eq21}
{{p}_{t}}({{y}_{i}}|v) \approx {{q}_{t}}({{y}_{i}}|v)=\frac{{{e}^{{f_{i}(v;\theta )}}}}{\sum\limits_{j=1}^{K}{{{e}^{{f_{j}(v;\theta )}}}}}.
\end{equation}
We can rewrite Eq. (\ref{eq20}) as
\begin{equation}\label{eq22}
{{p}_{s}}({{y}_{i}}|v) \approx {{q}_{s}}({{y}_{i}}|v)=\frac{{{n}_{i}}{{e}^{{f_{i}(v;\theta )}}}}{\sum\limits_{j=1}^{K}{{{n}_{j}}{{e}^{{f_{j}(v;\theta )}}}}}.
\end{equation}
Therefore, we get the first loss as
\begin{equation}\label{eq23}
Loss_{1}={\mathbb{E}_{p(v,y)}}-\log q_{s}(y|v).
\end{equation}
Similarly, for maximizing $I(z, y)$, we get the second loss as
\begin{equation}\label{eq24}
Loss_{2}={\mathbb{E}_{p(z,y)}}-\log q_{s}(y|z).
\end{equation}

\section{Impact of Different Components of BIB}
\label{diff BIB}
In order to further understand the influence of different components of the BIB loss function on the experimental results, we conducted ablation experiments on the loss function of BIB. As shown in Figure  \ref{figr1}, when the $\beta$ is not 0, the performance of the model can be improved, which indicates that information compression by information bottleneck is conducive to long-tailed visual recognition. On the other hand, if the use of $Loss_1$ and $Loss_2$ has not been re-balanced, even if the information bottleneck is used, it will not achieve satisfactory results.
\makeatletter
\newcommand\tabcaption{\def\@captype{table}\caption}
\newcommand\figcaption{\def\@captype{figure}\caption}
\makeatother
\begin{figure}[htbp]
    \centering
    \begin{minipage}{0.6\columnwidth} 
        \renewcommand{\arraystretch}{1.2}
        \raggedright 
        \fontsize{9}{11}\selectfont
        \setlength{\tabcolsep}{2pt}
        \begin{tabular}{c|c|c}
            \toprule
            Method & Balanced & $\beta=0$ \\
            \midrule
            CE\_2branch & \ding{55} & \ding{51} \\
            SDIB & \ding{55} & \ding{55} \\
            BSCE\_2branch & \ding{51} & \ding{51} \\
            BIB & \ding{51} &  \ding{55} \\
            \bottomrule
        \end{tabular}
    \end{minipage}%
    \hspace{-0.3\textwidth} 
    \begin{minipage}{0.35\columnwidth} 
        \raggedright 
        \includegraphics[width=\linewidth]{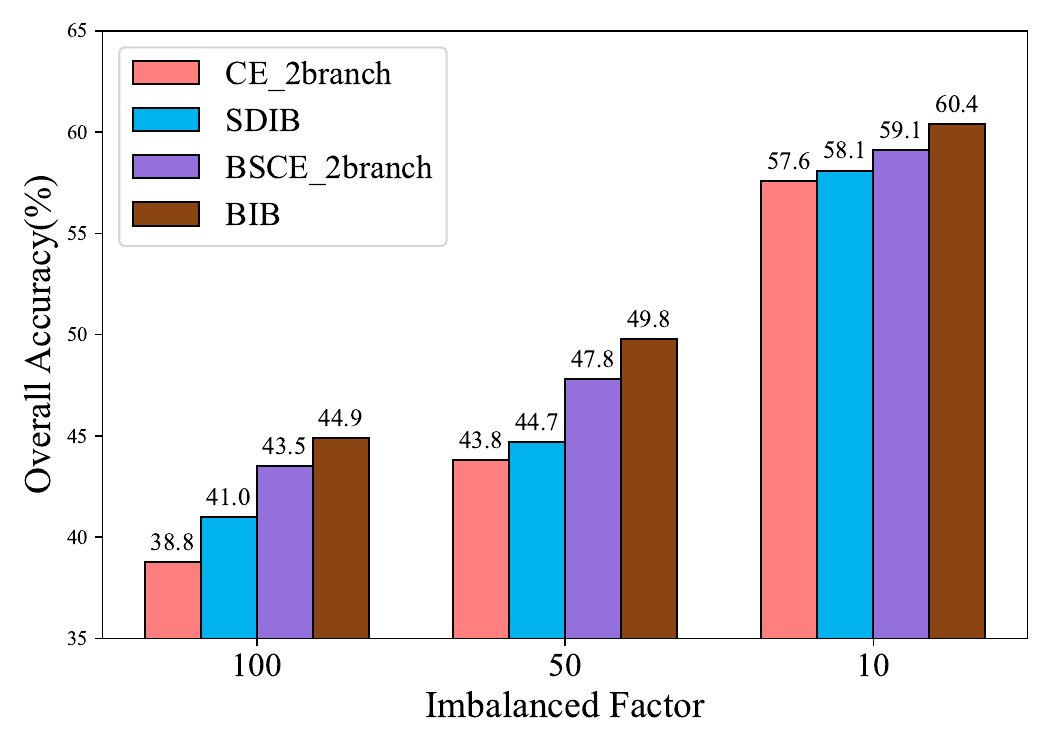} 
    \end{minipage}
    
    \caption{Ablation results of loss function. The Balanced option is \ding{55} means that the method uses cross entropy loss for $Loss_1$ and $Loss_2$. The $\beta$ is 0 means that $v$ and $z$ are independent.}
    \label{figr1}
\end{figure}

\section{Comparison Between BIB and BSCE}
\label{comp BIB BSCE}
From the form perspective, $Loss_1$ and $Loss_2$ are consistent with BSCE. In order to better understand our proposed BIB, we conducted extensive experiments on the ImageNet-LT with ResNet10. Table \ref{table5} shows the comparison results between BIB and BSCE. BSCE indicates that the network structure is the same as branch $v$ in BIB, and the loss function is BSCE. BSCE\_MLP indicates that the network structure is the same as branch $z$ in BIB, and the loss function is BSCE. BIB\_v and BIB\_z indicates the result obtained by directly using the output of $f(v;\theta)$ and $g(z;\theta)$.  BIB\_ensemble indicates the result obtained by the mean of $f(v;\theta)$ and $g(z;\theta)$, that is, the result we finally use. Table \ref{table5} shows that adding MLP only may hurt the performance of the model consistent with findings in \cite{17}. Since IB can remove label-independent information from the representation as much as possible, the head classes performance of BIB\_z has been significantly improved. At the same time, the tail classes performance has declined due to the limited number of tail class samples. However, the average performance has been greatly improved. We assume that $v$ can retain all the information in $x$, but the information will still be lost from $x$ to $v$. $v$ is upstream of $z$ in the information flow, and the improvement of the quality of $z$ will also lead to the improvement of the quality of $v$, so the performance of BIB\_v has also been greatly improved. At the same time, the mean of $f(v;\theta)$ and $g(z;\theta)$ can achieve the best performance.
\setlength{\tabcolsep}{7pt}
\begin{table}[htbp]
\renewcommand{\arraystretch}{1.2}
\begin{center}
\caption{Comparison between BIB and BSCE on ImageNet-LT with ResNet10.}
\label{table5}
\begin{tabular}{c|ccc|c}
\toprule
Method & Many & Medium & Few & All\\
\midrule
BSCE                     & 53.4 & 38.5 & 17.0 & 41.3 \\
BSCE\_MLP                & 51.7 & 36.6 & 18.2 & 39.9 \\
BIB\_v                 & 53.8 & \textbf{40.2} & \textbf{23.1} & 43.1 \\
BIB\_z                 & 54.6 & 38.5 & 19.2 & 42.1 \\
BIB\_ensemble          & \textbf{54.7} & 40.0 & 21.7 & \textbf{43.2} \\
\bottomrule
\end{tabular}
\end{center}
\end{table}

\section{Analysis of the Posterior Probability Distribution}
\label{post}
From equation \ref{eq post}, we can infer that
the closer the mean positive posterior probability of per class is to 1, the better. Figure \ref{fig6} (a) shows the mean positive posterior probability $\overline{q}({{y}_{i}}|x)$ of BIB. Figure \ref{fig6} (b) presents the diff (diff$=\overline{q}_{1}({{y}_{i}}|x)-\overline{q}_{2}({{y}_{i}}|x)$) between the posterior probability obtained by BIB and CE. The results show that CE is severely over-fitting to the head class and under-fitting to the tail class. Figure \ref{fig6} (c) presents the diff between the posterior probability obtained by BIB and BSCE, revealing BIB's superiority in most classes. (d) shows the mean positive posterior probability of MBIB. (e) presents the diff between the posterior probability obtained by MBIB and BIB, which shows that MBIB behaves better than BIB in the tail classes.
\begin{figure*}[htbp]
\centering
\includegraphics[height=8cm]{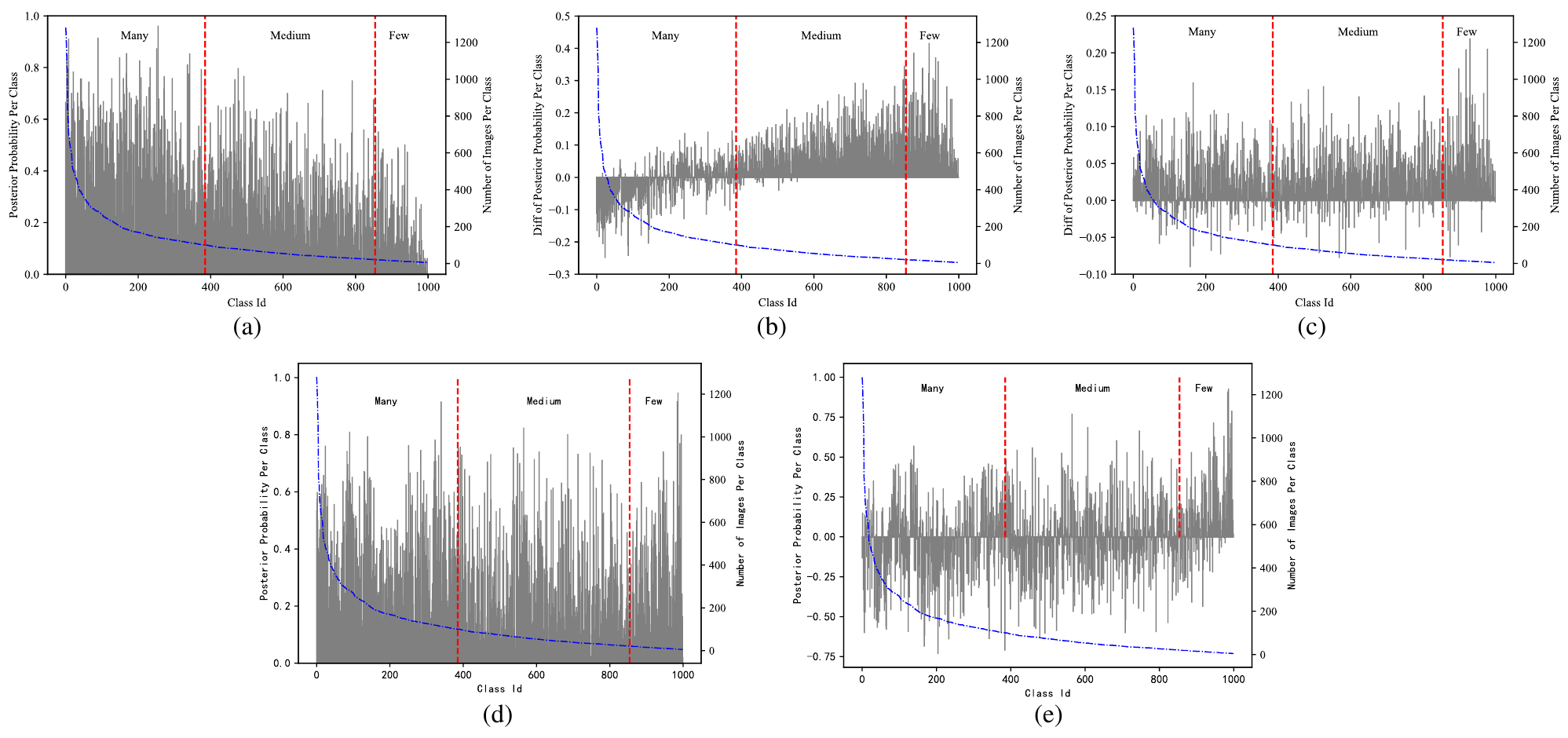}
\caption{
The mean Positive Posterior probability on ImageNet-LT with ResNet10. (a) The mean Positive Posterior probability per class of BIB. (b) Diff of the mean Positive Posterior probability per class between BIB and CE. (c) Diff of the mean Positive Posterior probability per class between BIB and BSCE.(d) The mean Positive Posterior probability per class of MBIB.(e) Diff of the mean Positive Posterior probability per class between MBIB and BIB.
}
\label{fig6}
\end{figure*}

\section{Analysis of the Learned Representations Showed by t-SNE}
\label{tsne}
\begin{figure*}[htbp]
\centering

\begin{subfigure}[b]{0.3\linewidth}
  \centering
  \includegraphics[height=4cm]{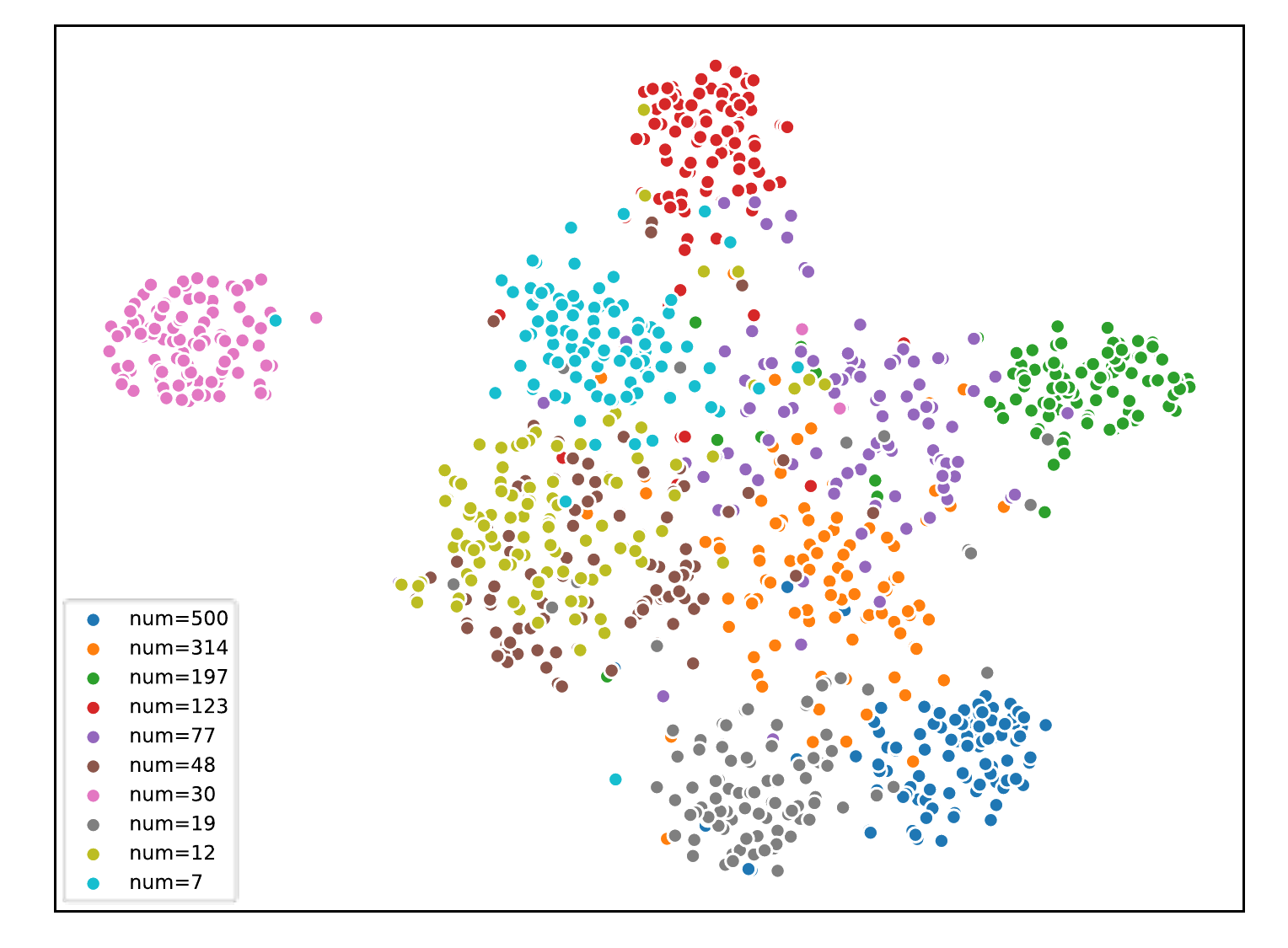}
  \caption{BSCE}
  \label{subfig:bsce}
\end{subfigure}
\quad
\begin{subfigure}[b]{0.3\linewidth}
  \centering
  \includegraphics[height=4cm]{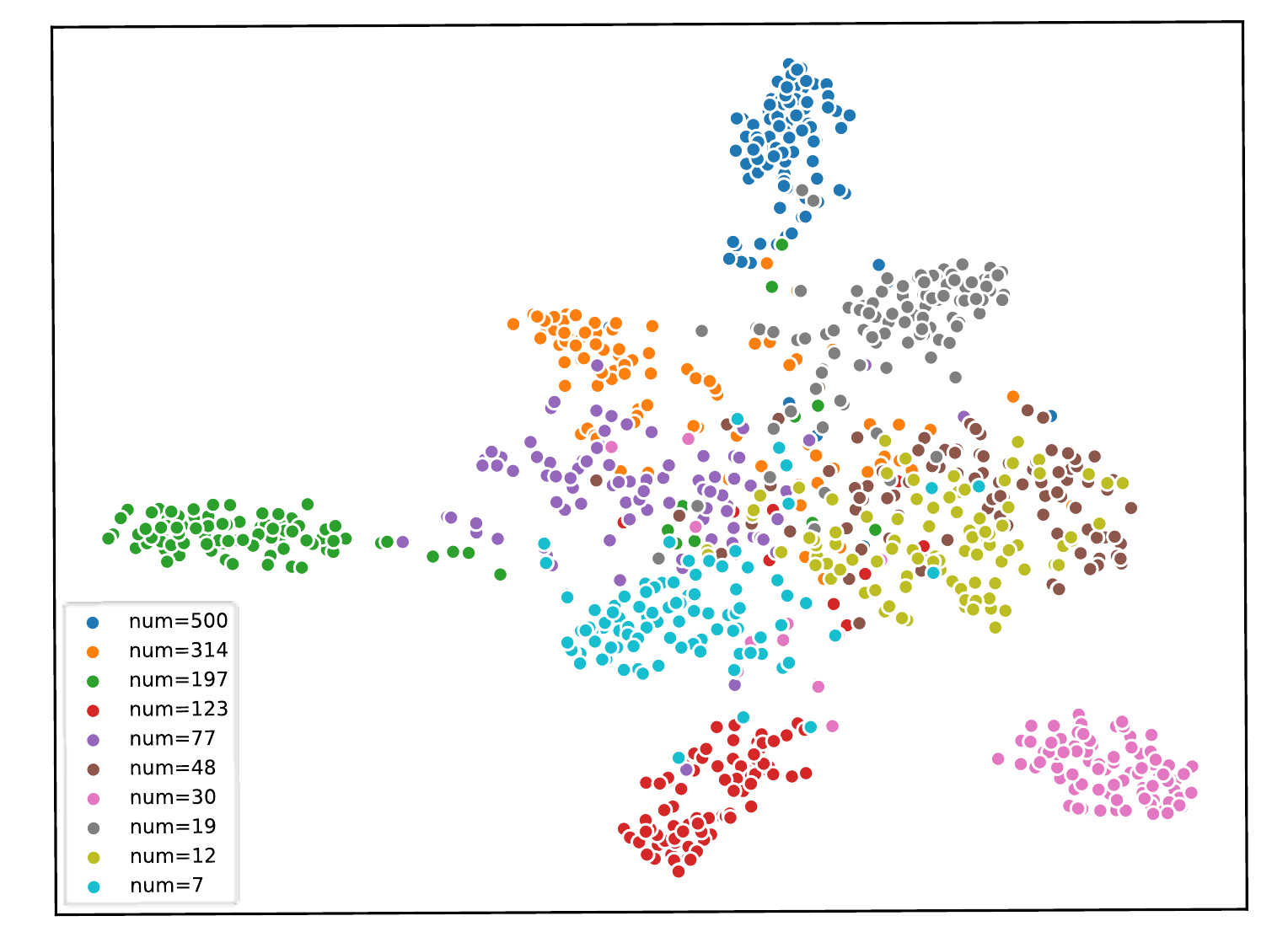}
  \caption{BIB\_v}
  \label{subfig:bsdib_v}
\end{subfigure}
\quad
\begin{subfigure}[b]{0.3\linewidth}
  \centering
  \includegraphics[height=4cm]{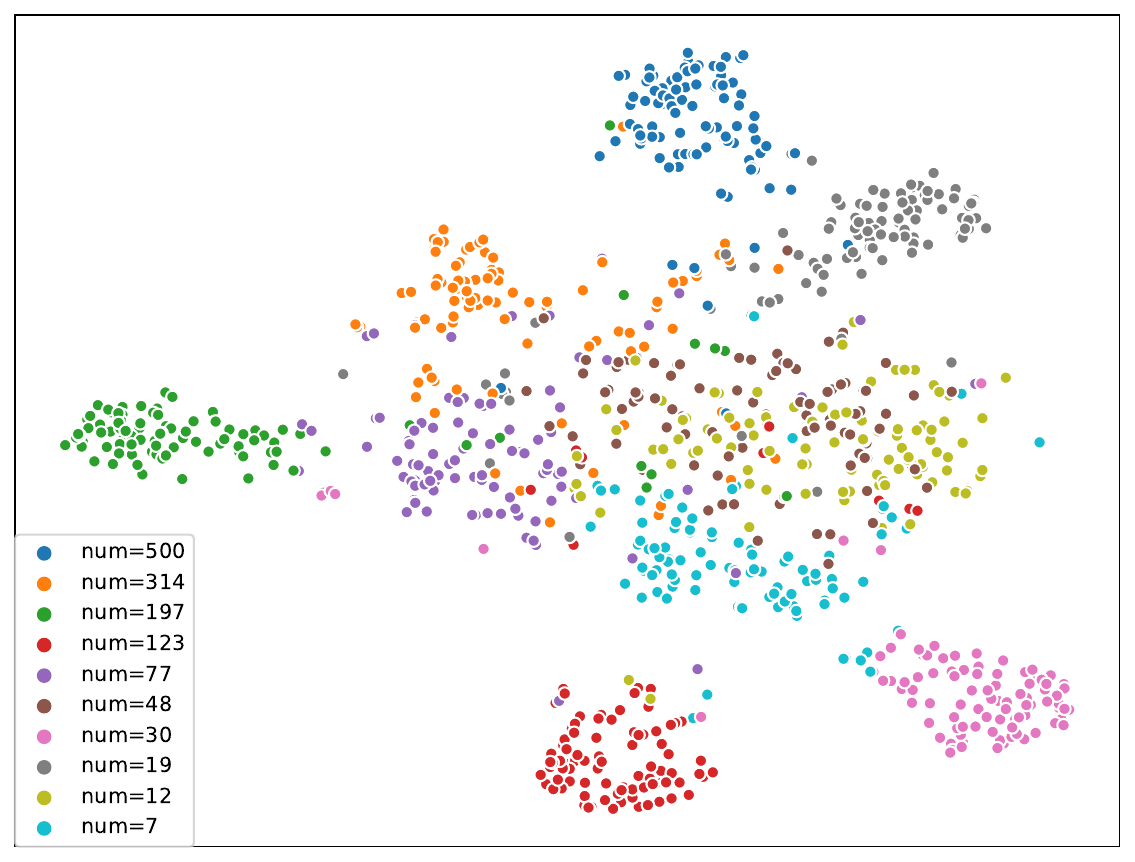}
  \caption{MBIB\_v}
  \label{subfig:mbsdib_v}
\end{subfigure}
\quad
\begin{subfigure}[b]{0.3\linewidth}
  \centering
  \includegraphics[height=4cm]{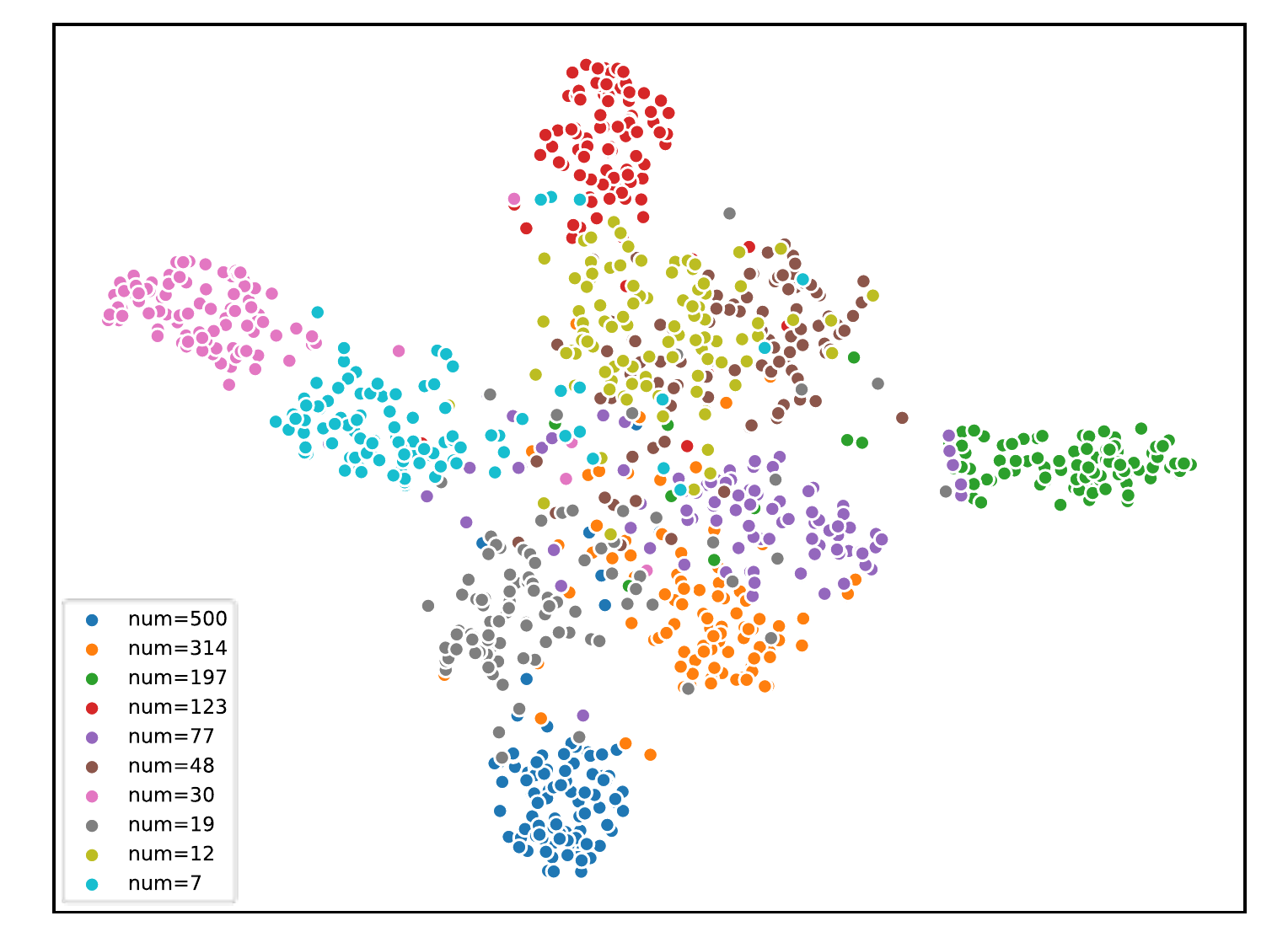}
  \caption{BSCE\_MLP}
  \label{subfig:bsce_mlp}
\end{subfigure}
\hspace{8pt}
\begin{subfigure}[b]{0.3\linewidth}
  \centering
  \includegraphics[height=4cm]{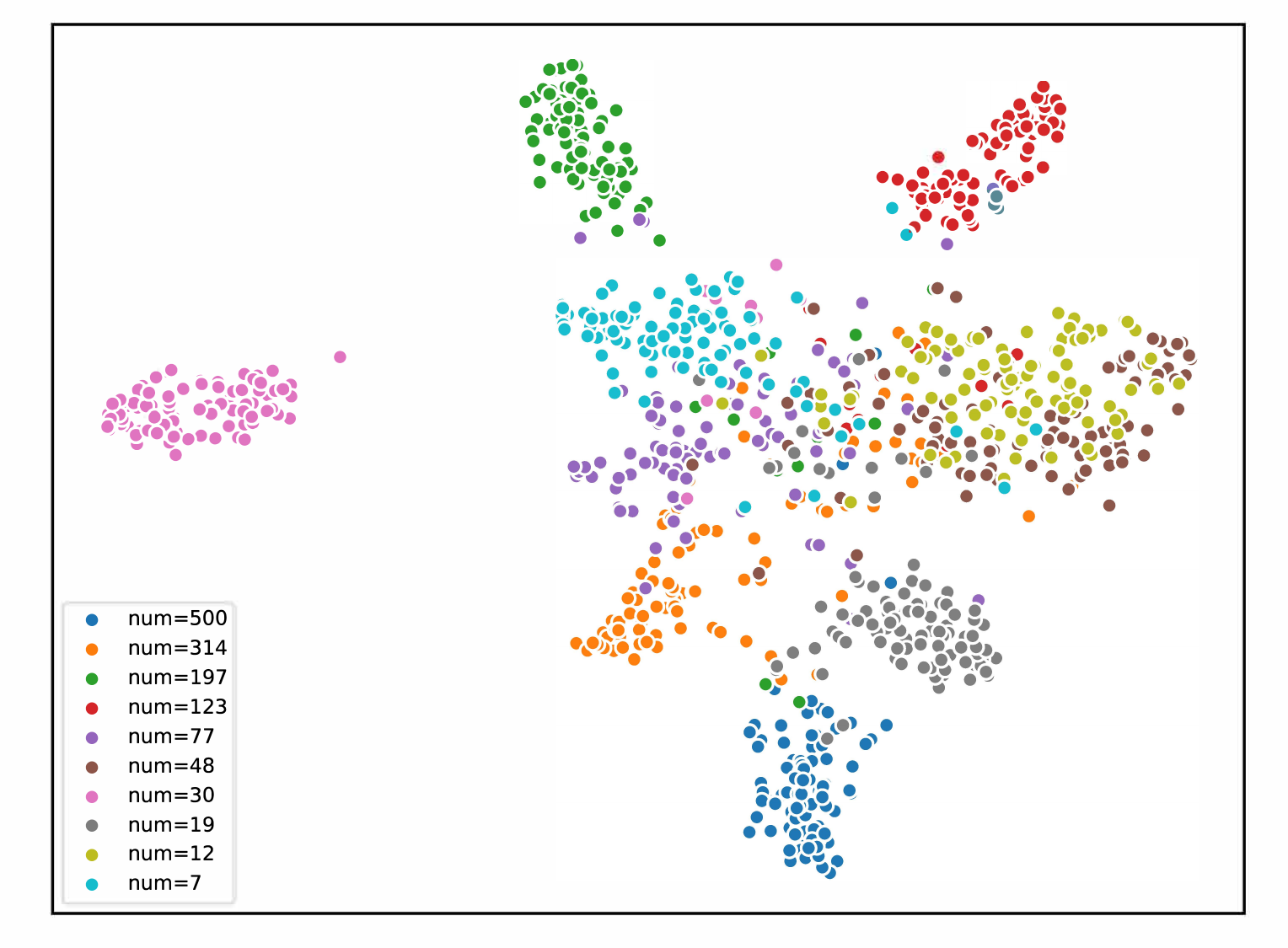}
  \caption{BIB\_z}
  \label{subfig:bsdib_z}
\end{subfigure}
\hspace{7pt}
\begin{subfigure}[b]{0.3\linewidth}
  \centering
  \includegraphics[height=3.95cm]{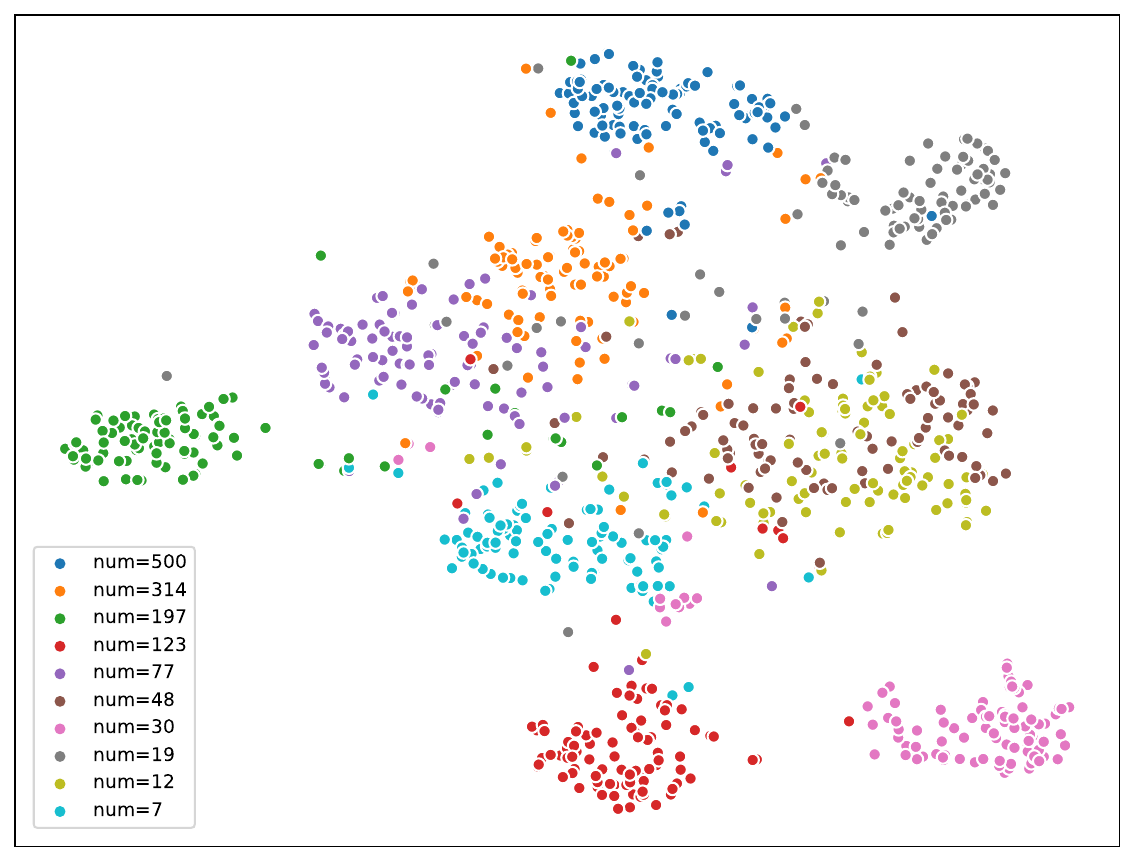}
  \caption{MBIB\_z}
  \label{subfig:mbsdib_z}
\end{subfigure}

\caption{Visualization analysis. The t-SNE is used to visualize the test set feature space on CIFAR-100-LT (IF=100) and 10 classes are selected. The lower left corner shows the number of samples for each class during training.}
\label{figa1}
\end{figure*}
 We visualize the representation of the test set obtained by different models using t-SNE, as shown in Figure \ref{figa1}. The visualization results show that the separability of inter-class obtained by BIB and MBIB increases, and the representations within a class are more aggregated. Both quantitative analysis and visualization results show that the quality of representations obtained by BIB and MBIB are better than others. It indicates that BIB and MBIB get better classification performance, and the representation spaces become better simultaneously, which is consistent with our expectations.

\section{Experiments on Two Other Mixture of BIB Networks}
\label{all bib}
We discussed two other Mixture of BIB network in the paper: one structure applied sequential BIB connections between layer pairs (SE-MBIB), while the other integrated BIB connections among all feature representations (ALL-MBIB). In this section, we present the experiment results of the two alternative MBIB network. Figure \ref{figa2} shows the performances of MBIB, SE-MBIB and ALL-MBIB on CIFAR-100-LT (IF=100). It is evident that both SE-MBIB and ALL-MBIB exhibit inferior performances compared to MBIB. The analysis can be found in the discussion section of the paper.

\begin{figure}
\centering
\includegraphics[height=4.5cm]{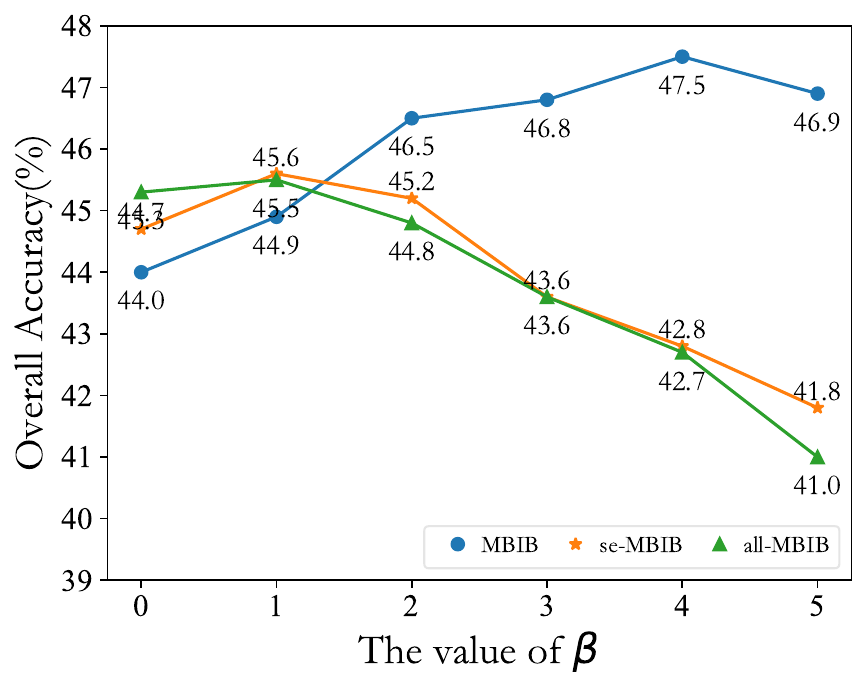}
\caption{
Overall accuracy of the three MBIB networks on CIFAR-100-LT (IF=100).
}
\label{figa2}
\end{figure}

\section{Combining MBIB and MoE}
As discussed in Section~\ref{section_discussion}, there is significant potential for integrating MBIB into Mixture-of-Experts (MoE) frameworks. To explore this, we conducted experiments by incorporating MBIB into each expert of an MoE model, using SADE as the base architecture. The reproduced performance of SADE and the results of the combined model (SADE + MBIB) on CIFAR-100-LT (IF=100) are summarized in Table \ref{table_moe_mbib}. We observe that this straightforward integration yields an improvement of approximately 1.6\% in overall accuracy compared to the vanilla SADE model. We believe that with carefully tailored, expert-specific MBIB configurations, even greater performance gains could be achieved.

\begin{table}[htbp]
\renewcommand{\arraystretch}{1.2}
\begin{center}
\caption{Top-1 accuracy(\%) of reproduced SADE and the comebined model (SADE + MBIB).}
\label{table_moe_mbib}
\begin{tabular}{c|cccc}
\toprule
Method & All & Many & Medium & Few\\
\midrule
SADE                     & 48.7 & 62.4 & 50.2 & 30.7 \\
SADE + MBIB              & 50.3 (+1.6) & 59.9 & 47.4 & 35.8 \\
\bottomrule
\end{tabular}
\end{center}
\end{table}

\section{Impact of Different Re-balancing Methods}

While prior works, such as Focal Loss, have shown the effectiveness of class re-balancing losses, recent studies \citep{28} have highlighted their limitations, including the lack of Fisher consistency. To address these issues, logit adjustment methods have been proposed \citep{28}. Furthermore, these methods can be combined, as demonstrated in \cite{105}. As a result, we incorporate both class re-balancing methods (Eq.\ref{eq:class re-balancing}) and logits adjustment methods (Eq.\ref{eq:logits adjustment}) as part of the re-balancing strategy in our approach.

We also conducted further ablation studies on the re-balancing techniques of MBIB to demonstrate the importance of each re-balancing technique, with the overall accuracy shown in the Table \ref{table_rebalancing_ablation}. MBIB w/o class re-balancing means $m = 0$ and the weight in Eq.\ref{eq:class re-balancing} is 1 for all classes. MBIB w/o logits adjustment means removing logits adjustment in MBIB. MBIB w/o re-balancing means not using any re-balancing techniques in MBIB. The experimental results reveal that removing either re-balancing component significantly impairs performance, especially the tail (few) classes. Using only class re-balancing methods (without logit adjustment) causes a substantial 2.2\% drop in overall accuracy of all classes. Similarly, employing only logit adjustment (without class re-balancing) results in a 0.9\% accuracy reduction of all classes. These findings further demonstrate the necessity of incorporating both techniques jointly.

\begin{table}[htbp]
\renewcommand{\arraystretch}{1.2}
\begin{center}
\caption{Ablation results of re-balancing techniques.}
\label{table_rebalancing_ablation}
\begin{tabular}{c|cccc}
\toprule
Method & All & Many & Medium & Few\\
\midrule
MBIB w/o re-balancing & 43.5 & 62.4 & 44.6 & 20.1 \\
MBIB w/o class re-balancing & 46.1 & 65.0 & 46.6 & 23.5 \\
MBIB w/o logits adjustment & 44.8 & 63.0 & 45.8 & 22.3 \\
MBIB & 47.0 & 64.1 & 47.7 & 26.1 \\
\bottomrule
\end{tabular}
\end{center}
\end{table}

\section{Tuning of Hyperparameters}

While hyperparameters ($a, b, \beta$) do influence performance, the variation is within a reasonable range (approximately $\pm2\%$ overall accuracy). Importantly, across most hyperparameter configurations shown in Figure \ref{fig4} and Figure \ref{fig5}, MBIB consistently outperforms all one-stage and two-stage baselines, demonstrating the reliability and robustness of our methods. 

From the perspective of information theory, the influence of hyperparameters on the performance is justifiable, and we tune the hyperparameters based on these insights: 

(1) Hyperparameters $a$ and $b$ control the relative emphasis on intermediate observations $v_1$ and $v_2$. Since $v_1$ contains the most label-related information but also the most redundant information (followed by $v_2$ and then $v_3$), we observe that:
\begin{itemize}
    \item $a$ should not be too large to avoid optimization disruption from redundant information
    \item $b$ can be larger than $a$ due to $v_2$ containing less redundant information than $v_1$
    \item The ideal configuration should satisfy $a < b < 1$
\end{itemize}
As illustrated in Figure \ref{fig5}, MBIB performs well when $a$ and $b$ meet this condition, confirming this theoretical expectation. We selected $a = 0.1$ and $b = 0.3$ within these satisfactory configurations.

(2) Hyperparameter $\beta$ regulates the emphasis on the IB objective optimization. As shown in Figure \ref{fig4}, performance generally improves when $\beta \neq 0$, and we select the value that yields the best results.

(3) For hyperparameter $m$ that regulates the class re-balancing loss, we follow established settings from prior works \citep{104}.

\section{Code of BIB Loss}
\label{BIB code}
We present the code and comments for BIB loss in Figure \ref{fig:bib_loss}.

\begin{figure*}[!t]
\begin{lstlisting}[
    language=Python,
    basicstyle=\ttfamily\small,
    breaklines=true,
    keywordstyle=\bfseries\color{blue},
    morekeywords={torch, np, F},
    emph={self, beta, gamma, reduction, temperature},
    emphstyle=\bfseries\color{purple},
    commentstyle=\itshape\color{green!50!black},
    stringstyle=\color{red!60!black},
    columns=flexible,
    numbers=left,
    numbersep=5pt,
    numberstyle=\tiny\color{gray},
    frame=shadowbox,
    framesep=3mm,
    framerule=1pt,
    backgroundcolor=\color{yellow!10},
    rulecolor=\color{black!30}
]
def BIB_loss(labels, v_logits, z_logits, sample_per_class, reduction):
    """
    Compute the Balanced Information Bottleneck (BIB) Loss.
    
    Args:
        labels: A int tensor of size [batch] - class labels for each sample.
        v_logits: A float tensor of size [batch, no_of_classes] - intermediate logits.
        z_logits: A float tensor of size [batch, no_of_classes] - final logits.
        sample_per_class: A int tensor of size [no_of_classes] - class frequencies.
        reduction: string. One of "none", "mean", "sum" - how to reduce the loss.
        
    Returns:
        loss: A float tensor - the calculated BIB Loss.
    """
    # Hyperparameter controlling the emphasis on VSDLoss
    beta = 5
    
    # ---------- Temperature scaling for logit adjustment ----------
    gamma = 0  
    N_max, _ = torch.max(sample_per_class, dim=0)  
    n = torch.reciprocal(sample_per_class / N_max)  
    T = torch.unsqueeze(torch.pow(n, gamma), 0)  
    temperature = T.expand_as(v_logits)
    
    # ---------- Variational Self-Distillation Loss (VSDLoss) ----------
    beta_logits_T = v_logits.detach() / temperature  
    beta_logits_S = z_logits / temperature          
    p_T = F.softmax(beta_logits_T, dim=-1)         
    # KL divergence between teacher and student
    vsd_loss = (p_T * p_T.log() - p_T * F.log_softmax(beta_logits_S, dim=-1)).sum(dim=-1).mean()
    
    # ---------- Class re-balancing weights ----------
    label_weighting = 0.1  # Weighting factor parameter
    weights = 1.0 / (np.array(sample_per_class) ** label_weighting)  
    weights = weights / np.sum(weights)
    weights = torch.FloatTensor(weights) * len(sample_per_class)  
    
    # ---------- Balanced Softmax Cross-Entropy Loss (BSCLoss) ----------
    spc = sample_per_class.type_as(v_logits) 
    spc = spc.unsqueeze(0).expand(v_logits.shape[0], -1)  
    
    # Apply logit adjustment
    v_logits = v_logits + spc.log()  # Logit adjustment based on class frequencies
    z_logits = z_logits + spc.log()  # Logit adjustment based on class frequencies

    # Calculate BSCLoss
    loss1 = F.cross_entropy(input=v_logits, target=labels, reduction=reduction, weight=weights)
    loss2 = F.cross_entropy(input=z_logits, target=labels, reduction=reduction, weight=weights)
    
    # ---------- Combined loss ----------
    loss = loss1 + loss2 + beta * vsd_loss
    return loss
\end{lstlisting}
\caption{Python code of BIB Loss.}
\label{fig:bib_loss}
\end{figure*}

\end{document}